\newcommand{\blind}{0}
\newcommand{\BlackBox}{\rule{1.5ex}{1.5ex}}  
\newtheorem{theorem}{Theorem}
\newtheorem{definition}{Definition}
\newtheorem{remark}{Remark}
\newtheorem{corollary}{Corollary}
\def\trans{^{\scriptscriptstyle \sf T}}
\def\Dsc{\mathcal{D}}
\def \be{\begin{equs}}
\def \ee{\end{equs}}
\def\bH{\mathbf{H}}
\def\bW{\mathbf{W}}
\def\bI{\mathbf{I}}
\def\bA{\mathbf{A}}
\def\bB{\mathbf{B}}
\def\bP{\mathbf{P}}
\def\Asc{\mathcal{A}}
\def\Dsc{\mathcal{D}}
\def\Ssc{\mathcal{S}}
\def\Esc{\mathcal{E}}
\def\Csc{\mathcal{C}}
\def\bzero{\mathbf{0}}
\def\bone{\mathbf{1}}
\def\P{\mathbb{P}}
\def\E{\mathbb{E}}
\def\R{\mathbb{R}}
\def\B{\mathbb{B}}
\def\N{\mathbb{N}}
\def\cC{\mathcal{C}}
\begin{document}

\def\spacingset#1{\renewcommand{\baselinestretch}%
{#1}\small\normalsize} \spacingset{1}


\if0\blind
{
  \title{\bf Federated Offline Reinforcement Learning}
  \author{ Doudou Zhou$^{1*}$,  
Yufeng Zhang$^{2}$\footnote{Zhou and Zhang contributed equally}, Aaron Sonabend-W\textit{}$^{1}$, \\
Zhaoran Wang$^{2}$, Junwei Lu$^{1}$, Tianxi Cai$^{1,3}$\bigskip \\
\small 
$^1${Department of Biostatistics, Harvard T.H. Chan School of Public Health} \\
\small 
$^2${Departments of Industrial Engineering and Management Sciences, Northwestern University} \\
\small 
$^3${Department of Biomedical Informatics, Harvard Medical School}
}
  \maketitle
} \fi

\if1\blind
{
  \bigskip
  \bigskip
  \bigskip
  \begin{center}
    {\LARGE\bf Federated Offline Reinforcement Learning}
\end{center}
  \medskip
} \fi

\bigskip
\begin{abstract}
Evidence-based or data-driven dynamic treatment regimes are essential for personalized medicine, which can benefit from offline reinforcement learning (RL). Although massive healthcare data are available across medical institutions, they are prohibited from sharing due to privacy constraints. Besides, heterogeneity exists in different sites. As a result, federated offline RL algorithms are necessary and promising to deal with the problems. In this paper, we propose a multi-site Markov decision process model that allows for both homogeneous and heterogeneous effects across sites. The proposed model makes the analysis of the site-level features possible. We design the first federated policy optimization algorithm for offline RL with sample complexity. The proposed algorithm is communication-efficient, which requires only a single round of communication interaction by exchanging summary statistics. We give a theoretical guarantee for the proposed algorithm, 
where the suboptimality for the learned policies is comparable to the rate as if data is not distributed. Extensive simulations demonstrate the effectiveness of the proposed algorithm. The method is applied to a sepsis dataset in multiple sites to illustrate its use in clinical settings. 
\end{abstract}

\noindent%
{\it Keywords:}  dynamic treatment regimes, multi-source learning, electrical health records
\vfill

\pagenumbering{arabic}
\newpage

\spacingset{1.9} 
\section{Introduction}
\label{sec: intro}
The construction of evidence-based or data-driven dynamic treatment regimes (DTRs) \citep{murphy2001marginal, lavori2004dynamic} is a central problem for personalized medicine. Reinforcement learning (RL) models, which often treat the observations by the episodic Markov decision process (MDP) \citep{ sutton2018reinforcement} have shown remarkable success for these applications. For instance, RL has been used for deciding optimal treatments and medication dosage in sepsis management \citep{raghu2017continuous, sonabend2020expert} and HIV therapy selection \citep{parbhoo2017combining}. 

However, many healthcare problems only allow for retrospective studies due to their offline nature \citep{lange2012batch}. Exploring new policies/treatments on patients is subject to ethical, financial, and legal constraints due to their associated significant risks and costs. Hence, it is typically infeasible or impractical to collect data online for finding DTR. With increasingly available massive healthcare datasets such as the electronic
health records (EHR) data \citep{johnson2016mimic} and mobile health data \citep{xu2021fedmood}, and due to the high risk of direct interventions on patients and the expensive cost of conducting clinical trials \citep{chakraborty2014dynamic}, offline RL is usually preferred for finding DTRs \citep{murphy2003optimal,robins2004optimal, chakraborty2013statistical},  whose objective is to learn an optimal policy based on existing datasets \citep{ levine2020offline,  kumar2019stabilizing,sonabend2020semi}. For example, mobile health data are employed for controlling blood glucose levels in patients with type $1$ diabetes \citep{luckett2019estimating} and the diagnosis of depression \citep{xu2021fedmood} due to the increasing popularity of mobile devices \citep{free2013effectiveness}.

However, healthcare data is often distributed across different sites. For instance, the EHR data are always stored locally in different hospitals, and the mobile health data are recorded and stored in the users' own mobile devices \citep{xu2021fedmood}. Although the aggregation of multi-site data can improve the quality of the models, for the protection of individual information \citep{agu2013smartphone, cao2017deepmood}, different medical institutions are often not allowed to share data \citep{hao2019efficient}, which hinders the direct aggregation of multi-site data \citep{duchi2014privacy} and affects the model accuracy significantly \citep{li2019privacy}. 

Besides, heterogeneity exists in different sites. For instance, patients are only given some specific drugs in a hospital due to the local suppliers, so models trained on a single site can mislead agents because of the limited actions. Similarly, doctors in the same hospital may follow a common treatment procedure, yielding insufficient exploration of the MDP. Thus, the trajectories in one hospital may distribute substantially differently from that induced by the optimal policy, which is known as distribution shift \citep{levine2020offline}. 

\subsection{Overview of the Proposed Model and Algorithm}

To resolve these challenges, in this paper, we propose a multi-site MDP model that allows heterogeneity among different sites to address previously discussed issues. Specifically, we consider the setting with $K$ sites generated from the episodic linear MDP \citep{puterman2014markov, sutton2018reinforcement} with the state space $\mathcal{S}$, action space $\Asc$ and horizon $H$ detailed in Section \ref{sec: problem}. We incorporate heterogeneity by allowing the transition kernel $\mathcal{P}^k=\big\{\P^k_{h}\big\}_{h=1}^{H}$ and the reward function $r^k =\big\{r_{h}^k\big\}_{h=1}^{H}$ to be different for the $K$ sites specified as follows:
\begin{equation}
\P^k_{h}(x_{h+1} \mid x_h, a_h)= \langle \phi_1(x_h, a_h), \mu_{h}^k (x_{h+1})\rangle,
\label{def: transition}
\end{equation}
\begin{equation}
    \E \big[ r^k_{h}(x_{h}, a_{h}) \mid x_{h}, a_{h} \big]= \langle\phi_0(x_h, a_h), \theta_{h}^0 \rangle + \langle\phi_1(x_h, a_h), \theta_{h}^k \rangle
\label{def: reward}
\end{equation}
for $k\in[K], h \in [H]$, where $x_h$ and $a_h$ are the state and action in the time $h$, respectively, $\mu_{h}^k$'s are unknown measures over $\Ssc$, and $\phi_0$ and $\phi_1$ are known feature maps. The feature maps can be thought of as the representation of relevant time-varying covariates. The effects of $\phi_0(x_h, a_h)$, which includes site-level covariates such as the size of the healthcare system, are assumed to be common across sites; while the effects of $\phi_1(x_h, a_h)$ are site-specific and hence capture the cross-site heterogeneity. Although the linear MDP model assumes that the transition kernel and expected rewards are the linear functions of the feature maps, the maps themselves can be nonlinear.

The proposed model allows the analysis of site-level information, which is helpful and sometimes necessary to learn optimal policies for sequential clinical decision-making \citep{gottesman2018evaluating}. For example, the hospital-level information, such as the number of intensive care units (ICU) and the ratio between doctors/nurses and patients, is related to the mortality of COVID-$19$ \citep{roomi2021declining}. Due to patient and site heterogeneity, personalized treatment is needed, and the optimal treatments can vary significantly among care units \citep{zhang2020individualized}. It is thus important to incorporate site-level information into policy optimization. We account for potential differences in the transition probability  \eqref{def: transition} across various sites by the site-specific measure $\mu_{h}^k$. The reward function defined in \eqref{def: reward} has two components. The first term $\langle\phi_0(x, a), \theta_{h}^0\rangle$ is homogeneous among the $K$ sites and can depend on the site-level features, while the second term  $\langle\phi_1(x, a), \theta_{h}^k\rangle$ is heterogeneous for each site.

Under this model, we propose a two-step Federated Dynamic Treatment Regime algorithm (FDTR) illustrated in Figure \ref{fig: ill} to estimate the model parameters and learn the optimal policy. First, the local dynamic treatment regime algorithm (LDTR) adapted from the pessimistic variant of the value iteration algorithm (PEVI) \citep{jin2020pessimism} is run at each site using its data only to learn the optimal policies and the corresponding value functions for each site. PEVI uses an uncertainty quantifier as the penalty function to eliminate spurious correlations from uninformative trajectories. Next, the summary statistics involving the learned value functions are shared across the sites. By utilizing these summary statistics, each site updates its policy using pessimism again. With the two steps, FDTR achieves efficient communication by sharing necessary summary statistics only once. 

\begin{figure}[t]
\begin{center}
\includegraphics[width=0.57\textwidth]{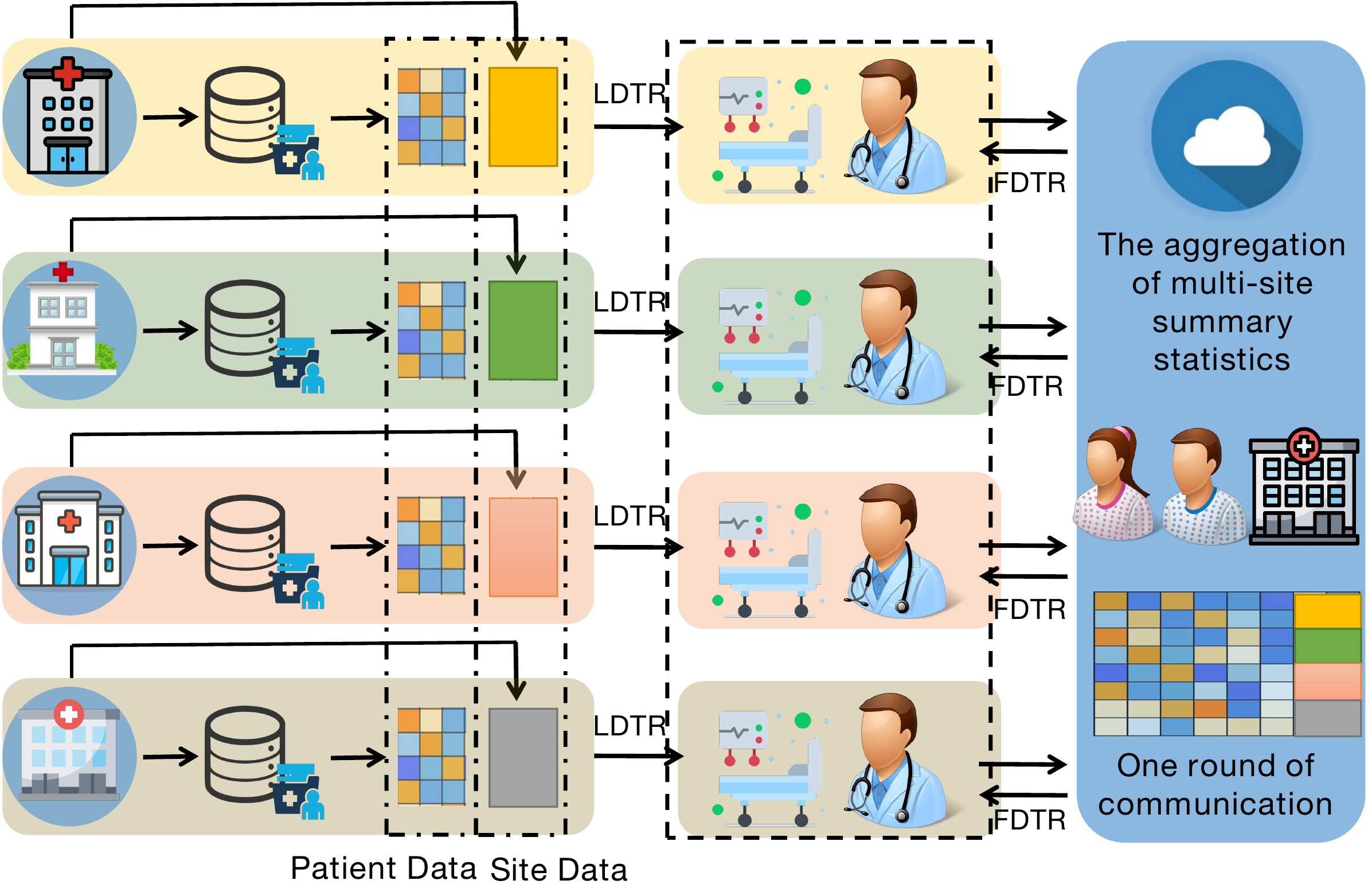}
\end{center}
\caption{An illustration of FDTR. 
\label{fig: ill}}
\end{figure}

\subsection{Related Work}

With single-source data, offline RL, such as Q-learning and PEVI, has been developed to estimate optimal DTR \citep{levine2020offline,jin2020pessimism}. Existing offline RL methods typically require strong assumptions on the collected data such as sufficient coverage \citep{fujimoto2019off, agarwal2020optimistic, gulcehre2020rl},  the ratio between the visitation measure of the target policy and behavior policies \citep{jiang2016doubly,thomas2016data,zhang2020gendice}, or a similar ratio over the state-action space \citep{NIPS2007_da0d1111,liu2019neural, scherrer2015approximate}. However, these assumptions either tend to be violated in healthcare settings or are not verifiable. 

One approach to overcome such challenges is to train optimal DTR using data from multiple sites. To further mitigate data-sharing constraints, federated learning has been developed to enable co-training algorithms based on shared summary-level data, with significant success in various tasks including prediction \citep{min2019predictive, hard2018federated}, and classification or clustering \citep{konevcny2016federated,mcmahan2017communication, yang2019federated}. { Compared to stochastic gradient descent (SGD)-based algorithms \citep{rothchild2020fetchsgd}, communication-efficient algorithms based on exchanging summary statistics are more practical for many healthcare problems since due to privacy concerns, only the summary-level data are ready for research and can be shared with researchers \citep{hong2021clinical}, while SGD-based algorithms still need to be performed based on patient-level data locally, which requires the researchers to have access to the patient-level data.} Communication-efficient algorithms with statistical guarantees have been developed for parametric likelihood framework \citep{jordan2018communication,battey2018distributed,duan2019heterogeneity}
and the modern machine learning models \citep{wang2019distributed, elgabli2020gadmm}.

In recent years, federated RL algorithms have also been proposed to co-train the underlying model overcoming data sharing constraints \citep{zhuo2019federated, nadiger2019federated,lim2020federated}. However, these algorithms focus on the online RL setting, which prevents them from being used for finding DTRs; additionally, many of them lack theoretical guarantees.  Until recently, several online algorithms have been proposed with guaranteed convergences \citep{fan2021fault,chen2022byzantine,xie2022fedkl,jadbabaie2022byzantine}. Federated RL is also different from multi-agent RL \citep{chen2018communication, zhang2019multi}, which considers the interaction of several agents in a common environment as illustrated by \cite{zhuo2019federated}. Nevertheless, 
limited methods exist for federated offline RL. Under homogeneous settings where the underlying models are assumed to be identical across sites, one may train local DTRs within each site and obtain a combined estimate by averaging the local DTR estimates across sites. However, such an approach cannot accommodate cross-site heterogeneity. As far as we are aware, no existing methods can perform co-training federated offline RL models in the presence of heterogeneity.

\subsection{Our Contributions}
To the best of our knowledge, FDTR is the first federated policy optimization algorithm for offline RL with sample complexity.
 Our primary innovation in the algorithm aspect is the multi-site MDP model, which takes site-level covariates into account to address heterogeneity and bias when performing federated learning. When site-level covariates are constant within each site, the PEVI algorithm cannot estimate site-level coefficients using single-site data. Although PEVI  could be applied to pooled patient-level data from all sites, this approach is not permissible under privacy constraints. To overcome this issue, we have developed an efficient two-step algorithm for federated learning in the presence of privacy constraints, using only summary-level data.

Our second innovation lies in our theoretical contributions. We provide theoretical justification for FDTR,
without the assumption of sufficient action coverage, a strong assumption which is often violated for clinical trials and observational healthcare data, due to the limited feasible treatments and sample size \citep{gottesman2018evaluating, gottesman2019guidelines}. Instead, we assume independence of the trajectories, a weaker assumption that is easier to be satisfied in reality within many disease contexts such as cancer and sepsis, as a standard simplification in theoretical development \citep{nachum2019dualdice,xie2021bellman,xie2021batch,zhan2022offline}. We provide an error rate for homogeneous coefficients that behaves as if all data were pooled together, which is particularly relevant for federated learning (Theorem \ref{theorem: estimation}). We offer an explicit rate (Corollary \ref{cor:coverage}) under the well-explored dataset assumption further and extend the linear MDP to a nonparametric model (Theorem \ref{theorem: spline}) to enhance the generalizability of the proposed method.

In the rest of the paper, we detail the multi-site MDP model in Section \ref{sec: problem}, present FDTR in Section \ref{sec: alg}, and show its theoretical property in Section \ref{sec: theorem}. 
Simulations are conducted in Section \ref{sec: simulation} and a real data application is given in Section \ref{sec: real}. 
Finally, discussions are shown in Section \ref{sec: conclusion}.

\section{Multi-site MDP Model}
\label{sec: problem}

We define $\| \cdot \|$ as $\ell_{2}$-norm, $\| \cdot \|_{\rm op}$ as operator norm. For vectors $w, \theta \in \R^d$, $\langle w,\theta\rangle = w\trans \theta$. For integer $d$, $[d]=\{1,\ldots, d\}$. For matrix $\bW$, $\bW^{+}$ is its Moore-Penrose inverse. $\bI_d$ denotes the $d \times d$ identity matrix. For symmetric matrices $\bA, \bB \in \R^{d\times d}$, we say $\bA \succeq \bB$ if $\bA - \bB$ is positive semi-definite. For $a, b \in \R$, $\min \{a,b\}^{+} = \max \big\{\min \{a,b\},0 \big\}$. We use $a_n = \widetilde O(b_n)$ when $a_n$ is bounded by $b_n$ up to logarithmic factors.

In this section, we detail the multi-site MDP model. Given the $k$th site for $k \in [K]$, a dataset $\Dsc_k = \big\{(x_{h}^{k,\tau}, a_{h}^{k,\tau}, r_{h}^{k,\tau})\big\}_{\tau, h=1}^{n_k,H}$ is collected a priori where at each step $h\in[H]$ of each trajectory (e.g., patient) $\tau \in [n_k]$, the experimenter (e.g., clinician) takes the action $a_h^{k,\tau} \sim \pi_h^k (a|x_h^{k,\tau})$ at the state $x_h^{k,\tau}$, receives the reward $r_h^{k,\tau} = r_h^k(x_h^{k,\tau}, a_h^{k,\tau})$ satisfying \eqref{def: reward}, and observes the next state $x_{h+1}^{k,\tau} \sim \P^k_h(\cdot \mid x_h=x_h^{k,\tau} ,a_h=a_h^{k,\tau})$ satisfying \eqref{def: transition}. { The  feature maps $\phi_0(x, a) \in \R^{d_0}$ and $\phi_1(x, a) \in \R^{d_1}$ in \eqref{def: transition} and \eqref{def: reward} are pre-specified. To guarantee sample complexity and model identifiability, we assume no co-linearity between $\phi_0(x, a)$ and $\phi_1(x, a)$ (detailed in Remark \ref{rmk5}). The transition probabilities only depend on features specified in $\phi_1(x, a)$ and we let $\phi_0(x, a)$ include site-level features and features with effects that are common across sites. For example, we can allow the age categories (e.g., children or adults) to have a common effect across sites, which are contained by $\phi_0$,  while allowing for a small degree of heterogeneity by including linear age effects in $\phi_1$.} All trajectories in $\Dsc_k$ for $k \in [K]$ are assumed to be independent. We impose no constraint on the behavior policies $\pi_h^k$'s and allow them to vary across the $K$ sites. For any policy $\pi=\{\pi_h\}_{h=1}^H$, we define the (state-)value function $V_h^{k,\pi}:\Ssc \to \R$ and the action-value function (Q-function) $Q_h^{k,\pi}:\Ssc \times \Asc \to \R$ for the $k$th site at each step $h\in[H]$ as
\begin{align}
 V_h^{k,\pi}(x) & = \E_{k,\pi}\Big[ \sum_{t=h}^H r^k_t(x_t, a_t)\mid x_h=x  \Big], 
\label{eq:def_value_fct} \\
Q_h^{k,\pi}(x,a) & = \E_{k,\pi}\Big[\sum_{t=h}^H r_t^k(x_t, a_t) \mid x_h=x, a_h=a  \Big].
\label{eq:def_q_fct}
\end{align}
Here the expectation  $\E_{k,\pi}$ in \eqref{eq:def_value_fct} and \eqref{eq:def_q_fct} is taken with respect to the randomness of the trajectory induced by $\pi$, which is obtained by taking the action $a_t \sim \pi_t (\cdot \mid x_t)$ at the state $x_t$ and observing the next state $x_{t+1} \sim \P^k_t(\cdot \mid x_t, a_t)$ at each step $t \in [H]$. 
Meanwhile, we fix $x_h = x \in \Ssc $ in \eqref{eq:def_value_fct} and $(x_h, a_h) = (x, a) \in \Ssc \times \Asc$ in \eqref{eq:def_q_fct}. Bellman equation implies 
$$
V_{h}^{k,\pi}(x)=\big\langle Q_{h}^{k,\pi}(x, \cdot), \pi_{h}(\cdot \mid x)\big\rangle_{\Asc}, \quad  Q_{h}^{k,\pi}(x, a) = (\B_{h}^k V_{h+1}^{k,\pi} )(x, a) 
$$
where $\langle\cdot, \cdot\rangle_{\Asc}$ is the inner product over $\Asc$, $\B_{h}^k$ is the  Bellman operator defined as $(\B_{h}^k f)(x, a)  = \E_k\big[r^k_{h}(x_{h}, a_{h})+f(x_{h+1}) \mid x_{h}=x, a_{h}=a\big]$ for any function $f: \mathcal{S} \rightarrow \mathbb{R}$, with $\E_k$ taken with respect to the randomness of the reward $r_{h}^k(x_{h}, a_{h})$ and next state $x_{h+1}$ where $x_{h+1} \sim  \P^k_{h}(x_{h+1}  \mid x_h, a_h)$. By \eqref{def: transition} and \eqref{def: reward}, for any function $V$, there exists $\bar \theta_{h}^k \in \R^{d_1}$ such that 
\begin{equation}
    (\B_{h}^k V)(x, a) = \langle\phi_0(x, a), \theta_{h}^0\rangle + \langle\phi_1(x, a),\bar \theta_h^k \rangle,
\label{eq: bellman}    
\end{equation}
where $\bar \theta_h^k = \theta_h^k +  \int_{x^{\prime} \in \mathcal{S}} \mu_{h}^k\left(x^{\prime}\right)  V( x^{\prime} ) \mathrm{d} x^{\prime}$. Therefore, the coefficients $\theta_{h}^0$ and $\bar \theta_h^k$ can be estimated through linear regression if the values $(\B_{h}^k V)(x, a)$ are known, which inspires us to derive the FDTR algorithm.

\section{Federated Dynamic Treatment Regimes Algorithm}
\label{sec: alg}
Suppose that we are treating the $k$th site. Inspired by \eqref{eq: bellman}, we notice the key step is to construct estimates $\widehat{V}_{h}^k$ of $V_{h}^k$ and $\widehat \B_{h}^k \widehat{V}^k_{h+1}$ of $\B_{h}^k V_{h}^k$  based on $\Dsc_k$ and the summary statistics of $ \{ \Dsc_j \}_{j\neq k}$. As mentioned in Section \ref{sec: intro}, pessimism plays an important role in the control of suboptimality. Define $\Dsc = \cup_{k=1}^K \Dsc_k$. Following the line of \cite{jin2020pessimism}, we achieve pessimism by the notion of multi-site confidence bound $\Gamma_{h}^k$ 
as follows.
\begin{definition}[Multi-site confidence bound] For the $k$th site, we say $\big\{\Gamma^k_{h}:  \mathcal{S} \times \Asc \rightarrow \mathbb{R} \big\}_{h=1}^{H}$ is a $\xi$-multi-site confidence bound of $V = \{ V_h \}_{h=1}^H$ with respect to $\mathbb{P}_{\Dsc}$ if the event
\begin{equation}
     \Esc_k (V) = \big\{|(\widehat{\B}^k_{h} V_{h+1})(x, a)-(\B^k_{h} V_{h+1})(x, a)| \leq \Gamma_{h}^k(x, a) \text { for all }(x, a) \in \mathcal{S} \times \Asc, h \in[H]\big\}
\label{eq:def_event_eval_err_general}
\end{equation}
satisfies $\P_{\Dsc}\big(\Esc_k (V) \big) \geq 1-\xi$. Here the value functions $V = \{ V_h \}_{h=1}^H$ and $\big\{\Gamma^k_{h}\big\}_{h=1}^{H}$ can depend on $\Dsc$. Specifically, if both of them only depend on $\Dsc_k$, then $\P_{\Dsc}\big(\Esc_k (V) \big) = \P_{\Dsc_k}\big(\Esc_k (V) \big)$.
\label{def1}
\end{definition}
By definition,  $\Gamma_{h}^k$ quantifies the approximation error of $\widehat{\B}^k_{h} V_{h+1}$ for ${\B}^k_{h} V_{h+1}$, which is important in eliminating the spurious correlation as discussed by \cite{jin2020pessimism}. Before we present the details of FDTR, we introduce the following notations for simplicity:
\begin{equation}
\begin{aligned}
&  \Phi_{j,h}^k = \big( \phi_j(x_{h}^{k,1}, a_{h}^{k,1}), \ldots, \phi_j(x_{h}^{k,n_k}, a_{h}^{k,n_k}) \big)\trans, j =0,1 ;\\
&  \Phi_h^k = \big( \phi^k(x_{h}^{k,1}, a_{h}^{k,1}), \ldots, \phi^k(x_{h}^{k,n_k}, a_{h}^{k,n_k}) \big)\trans;  \Phi_h = \big( (\Phi_h^1)\trans,\ldots, (\Phi_h^K)\trans  \big) \trans; \Lambda_h =  \Phi_h\trans  \Phi_h; \\
& Y_h^k(V) = \big(r_h^{k,1} + V( x^{k,1}_{h+1}) ,\ldots,r_h^{k,n_k} + V( x^{k,n_k}_{h+1}) \big)\trans;  \Lambda_h^k = (\Phi_{0,h}^k,\Phi_{1,h}^k)\trans (\Phi_{0,h}^k,\Phi_{1,h}^k); \\
\end{aligned}    
\label{def: notation}
\end{equation}
where $\phi^k(x,a)= \big( \phi_0 (x, a)\trans, \bzero_{(k-1)d_1}\trans, \phi_1 (x, a)\trans, \bzero_{(K-k) d_1}\trans \big)\trans \in \R^{d_K}$ with $d_K = d_0 + d_1 K$, and $V:\Ssc \to \R$ is some value function. Besides, let $\phi(x,a) = ( \phi_0 (x, a)\trans, \phi_1 (x, a)\trans )\trans \in \R^d$ where $d = d_0 + d_1$ and  $N = \sum_{k=1}^K n_k$. We define the empirical mean squared Bellman error:
\begin{equation}
 M_{h}^k(\theta^0,\theta^k \mid \widehat V_{h+1}^k) = \big\| Y_h^k(\widehat V_{h+1}^k) - \Phi_{0,h}^k \theta^0 - \Phi_{1,h}^k \theta^k \big \|^2 \text{ for } h \in [H], k \in [K] \,.
\end{equation}
If the individual-level information is shareable across the $K$ sites, we can aggregate $M_{h}^k(\theta^0,\theta^k \mid \widehat V_{h+1}^k)$ to estimate $\theta^0_h$ and $\bar \theta_h^k$  for $k \in [K]$ simultaneously. However, as we state above, it is usually not possible in reality due to privacy concerns. On the contrary, we assume that we have some preliminary estimators for the value functions $\big\{ \widetilde{V}^k_{h} \big\}_{k,h=1}^{K,H}$. Here we adopt the PEVI of linear MDP algorithm  in \cite{jin2020pessimism} to obtain the preliminary estimators  $\big\{ \widetilde{V}^k_{h} \big\}_{k,h=1}^{K,H}$, which is summarized in Algorithm \ref{alg0}.
\begin{algorithm}[t]
\caption{Local Dynamic Treatment Regime (LDTR).}\label{alg0}
\begin{algorithmic}[1]

\For{ site $k = 1,2,\ldots, K$}

\State Input: Dataset $\Dsc_k =\big\{(x_{h}^{k,\tau}, a_{h}^{k,\tau}, r_{h}^{k,\tau})\big\}_{\tau, h=1}^{n_k, H}$.

\State Set $\lambda=1, \quad \alpha_k = c d H \sqrt{\zeta_k}, \quad \text { where } \zeta_k =\log (2 d H n_k / \xi) $.

\State Initialization: set $\widetilde{V}_{H+1}^k(x) \leftarrow 0$. 
 
\For{ step $h = H,H-1,\ldots, 1$}
    \State Set $\Lambda_{h}^k$ as in \eqref{def: notation} and $\widetilde \theta_h^k = (\Lambda_{h}^k + \lambda \bI_d )^{-1}\big( Y^k_h(\widetilde V_{h+1}^{k})\trans \Phi_{0,h}^k, Y^k_h(\widetilde V_{h+1}^{k})\trans \Phi_{1,h}^k \big)\trans $.
      
    \State Set $\widetilde \Gamma_{h}^k(\cdot, \cdot) \leftarrow \alpha_k  \sqrt{ \phi(\cdot, \cdot)\trans (\Lambda_{h}^k + \lambda \bI_d )^{-1} \phi(\cdot, \cdot) }$.
    
    \State Set $ \widetilde{Q}^k_{h}(x, a) \leftarrow \min \big\{\phi(x, a)\trans \widetilde \theta_h^{k} - \widetilde \Gamma_{h}^k(x, a), H-h+1\big\}^{+}$.
     
    \State Set $ \widetilde{\pi}_{h}^k(\cdot \mid \cdot) \leftarrow \arg \max _{\pi_{h}}\big\langle\widetilde{Q}^k_{h}(\cdot, \cdot), \pi_{h}(\cdot \mid \cdot)\big\rangle_{\Asc}$.
      
     \State Set $\widetilde{V}^k_{h}(\cdot) \leftarrow\big\langle\widetilde{Q}^k_{h}(\cdot, \cdot), \widetilde{\pi}_{h}^k(\cdot \mid \cdot)\big\rangle_{\Asc}$.
    \EndFor
    \State Output: $\widetilde \pi^k = \{\widetilde \pi_h^k  \}_{h=1}^H$, $\widetilde{V}^k = \{ \widetilde{V}_h^k  \}_{h=1}^H$.
 \EndFor
\end{algorithmic}
\end{algorithm}
We consider the objective function for $\theta = \big( (\theta^0)\trans, (\theta^1)\trans,\ldots, (\theta^K)\trans \big)\trans \in \R^{d_K}$: 
\begin{equation}
  f_h^k(\theta \mid \widehat V^k, \{\widetilde V^j\}_{j \neq k})  =  \sum_{j\neq k} M_{h}^j(\theta^0,\theta^j \mid \widetilde V_{h+1}^j)  +  M_{h}^k(\theta^0,\theta^k \mid \widehat V_{h+1}^k)  + \lambda( \|\theta_0 \|^2 + \|\theta_k \|^2) \,.
\label{eq:obj}
\end{equation}
 For the $k$th site, we only need $\theta^0$ and $\theta^k$ to estimate the value functions. We do not impose the $\ell_2$ regularization on $\theta_j$ for $j \neq k$ for the simplicity of theoretical analysis, which can be revealed in the proof of Theorem \ref{theorem: main}.  The objective function \eqref{eq:obj} has the explicit minimizer 
\begin{equation}
\begin{aligned}
     \widehat \theta_{h,k} & =  \big( (\widehat \theta_{h,k}^{0})\trans,\ldots, (\widehat \theta_{h,k}^K)\trans  \big)\trans  
     & = (\Lambda_h + \bH_{k,\lambda}) ^{+}\{ \sum_{j\neq k}  (\Phi_h^j)\trans Y_h^j(\widetilde V_{h+1}^j)  + (\Phi_h^k)\trans Y_h^k(\widehat V_{h+1}^k) \},
\end{aligned}
\label{def: hat beta}    
\end{equation}
where $\bH_{k,\lambda} = {\rm diag}(\lambda \bone_{d_0} ,\bzero_{d_1(k-1)},\lambda \bone_{d_1},\bzero_{d_1(K-k)})$. Let $\bP_{\Phi_{1,h}^j} = \Phi_{1,h}^j \big( (\Phi_{1,h}^j)\trans \Phi_{1,h}^j \big)^{+} (\Phi_{1,h}^j)\trans \in \R^{n_j \times n_j}$ be the projection matrix to the column space of $\Phi_{1,h}^j$. We define
\begin{equation}
\Sigma_h^{k}  = \Lambda_h^{k} + \lambda \bI_d +     \begin{bmatrix}
 \sum_{j \neq k}^{K}  ( \Phi_{0,h}^j)\trans \big( \bI_{n_j} - \bP_{\Phi_{1,h}^j} \big)\Phi_{0,h}^j  & \mathbf{0}_{d_0 \times d_1}\\
\mathbf{0}_{d_1 \times d_0}  & \mathbf{0}_{d_1 \times d_1}
\end{bmatrix} \in \R^{d \times d} \,.
\label{def: sigma}
\end{equation}
We then set 
$(\widehat \B^{k}_{h} \widehat V_{h+1}^k)(x,a) = \phi^k(x,a)\trans \widehat \theta_{h,k} = \phi_0 (x,a)\trans   \widehat \theta_{h,k}^0+ \phi_1 (x,a)\trans \widehat \theta_{h,k}^k \,.$
Meanwhile, we construct $\Gamma^k_{h}$ based on $\Dsc$ as
\begin{equation}
  \widehat \Gamma_{h}^k(x, a) 
  = \alpha \sqrt{ \phi(x, a)\trans ( \Sigma_{h}^k)^{-1} \phi(x, a)}
\label{def: gamma} 
\end{equation}
at each step $h \in[H]$. 
Here $\alpha > 0$ is a scaling parameter to be specified later according to the theoretical rate. In addition, we construct $\widehat{V}^k_{h}$ based on $\Dsc$
as
$$
\begin{aligned}
& \widehat{Q}^k_{h}(x, a) =\min \{(\widehat{\B}^k_{h} \widehat{V}^k_{h+1})(x, a)- {  \widehat \Gamma^k_{h}(x, a) } , H-h+1 \}^{+}, \\
& \widehat{V}^k_{h}(x) =\langle\widehat{Q}^k_{h}(x, \cdot), \widehat{\pi}^k_{h}(\cdot \mid x)\rangle_{\Asc}, \quad \text { where } \widehat{\pi}^k_{h}(\cdot \mid x) = \underset{\pi_{h}}{\arg \max }\langle\widehat{Q}^k_{h}(x, \cdot), \pi_{h}(\cdot \mid x)\rangle_{\Asc}.
\end{aligned}
$$
Notice that the above procedures only require summary statistics from other sites. To be specific, only $(\Phi^j_h)\trans\Phi^j_h$ and $(\Phi^j_h)\trans Y^j_h(\widetilde V_{h+1}^j)$, for $j \in [K]/\{k\}$ are required when we are treating the $k$th site. 
The FDTR algorithm is summarized in Algorithm \ref{alg1}. The communication cost is $O(K d^2 H)$ for getting the summary statistics from the other sites and the computational complexity is $O(N d^2 H  + K d^3 H)$ for calculating the summary statistics and conducting $KH$ linear regressions. For the homogeneous parameter $\theta_{h}^0$, one may use the average estimators from the $K$ site defined as $\widehat \theta_{h}^0 = \frac{1}{K}\sum_{k=1}^K \widehat \theta_{h,k}^0$ in practice, while the rate of $\widehat \theta_{h}^0$ is the same as $\theta_{h,k}^0$ for each $k$ as revealed by Corollaries \ref{cor:coverage} and \ref{cor:site-spec}.

\begin{remark}
In comparison, using SGD for linear regression necessitates multiple communication rounds, each costing $O(d H K)$ due to gradient exchanges across $K$ sites for $H$ time points. While SGD demands $O(d H^2/\epsilon^2)$ iterations for an 
$\epsilon$-optimal solution in convex problems \citep{harold1997stochastic},  leading to a total computational cost of $O(d^2H^3K/\epsilon^2)$ among sites, FDTR, assuming $n_k > d$, is more time-efficient than SGD when $\epsilon = o(H/\sqrt{n_k})$,  a frequent scenario in RL \citep{agarwal2021theory}.
\end{remark}

\begin{algorithm}[t]
\caption{  Federated Dynamic Treatment Regime (FDTR): for the $k$th site. }\label{alg1}
\begin{algorithmic}[1]
\State Input: Dataset $\Dsc_k=\big\{(x_{h}^{k,\tau}, a_{h}^{k,\tau}, r_{h}^{k,\tau})\big\}_{\tau, h=1}^{n_k, H}$. The summary statistics from the other $K-1$ sites $\big \{ (\Phi^j_h)\trans\Phi^j_h, (\Phi^j_h)\trans Y^j_h(\widetilde V_{h+1}^j) \big\}_{h=1}^{H}, \text{ for } j \in [K]/\{k\}.$
\State Set 
$\lambda=1, \quad \alpha = c  d H \sqrt{\zeta}, \quad \text { where } \zeta=\log (2 d H N / \xi) $.

\State Initialization: set $\widehat {V}_{H+1}^k(x) \leftarrow 0$.


\For{ step $h = H,H-1,\ldots, 1$}
 
   \State Set $ \big( (\widehat \theta_{h,k}^0)\trans, (\widehat \theta_{h,k}^k)\trans \big)\trans$  as in \eqref{def: hat beta} and $\widehat \Gamma_{h}^k(\cdot, \cdot)$ as in \eqref{def: gamma}.
    
     \State Set $ \widehat{Q}^k_{h}(\cdot, \cdot) \leftarrow \min \big\{\phi(\cdot, \cdot)\trans \big( (\widehat \theta_{h,k}^0)\trans, (\widehat \theta_{h,k}^k)\trans \big)\trans - \widehat \Gamma_{h}^k(\cdot, \cdot), H-h+1\big\}^{+}$.
    
    
     
    
    
     
     
    \State Set $ \widehat{\pi}_{h}^k(\cdot \mid \cdot) \leftarrow \arg \max _{\pi_{h}}\big\langle\widehat{Q}^k_{h}(\cdot, \cdot), \pi_{h}(\cdot \mid \cdot)\big\rangle_{\Asc}$.
      
     \State Set $\widehat{V}^k_{h}(\cdot) \leftarrow\big\langle\widehat{Q}^k_{h}(\cdot, \cdot), \widehat{\pi}_{h}^k(\cdot \mid \cdot)\big\rangle_{\Asc}$.
\EndFor
\State Output: $\widehat \pi^k = \{\widehat \pi_h^k  \}_{h=1}^H$.
\end{algorithmic}
\end{algorithm}

\section{Theoretical Analysis}
\label{sec: theorem}
We now state the theoretical properties of FDTR.
For the multi-site  MDP $(\mathcal{S}, \Asc, H, \mathcal{P}^k, r^k)$, we use $\pi^{k,*}, Q_{h}^{k,*}$, and $V_{h}^{k,*}$ to denote the optimal policy, Q-function, and value function for the $k$th site, respectively. We have $V_{H+1}^{k,*}=0$ and the Bellman optimality equation
$
V_{h}^{k,*}(x)=\max _{a \in \Asc} Q_{h}^{k,*}(x, a)$, $Q_{h}^{k,*}(x, a)=(\B_{h}^k V_{h+1}^{k,*})(x, a)  \, .
$
Meanwhile, the optimal policy $\pi^{k,*}$ is specified by
$
\pi_{h}^{k,*}(\cdot \mid x)=\underset{\pi_{h}}{\arg \max }\big\langle Q_{h}^{k,*}(x, \cdot), \pi_{h}(\cdot \mid x)\big\rangle_{\Asc}$ and $V_{h}^{k,*}(x)=\big\langle Q_{h}^{k,*}(x, \cdot), \pi_{h}^{k,*}(\cdot \mid x)\big\rangle_{\Asc} \, ,
$
where the maximum is taken over all functions mapping from $\mathcal{S}$ to distributions over $\Asc$. We aim to learn a policy that maximizes the expected cumulative reward. Correspondingly, we define the performance metric at the $k$th site as $\operatorname{SubOpt}^k(\pi ; x)=V_{1}^{\pi^{k,*}}(x)-V_{1}^{k,\pi}(x)$, 
which is the suboptimality of the policy $\pi$ with regard to  $\pi^{k,*}$ given the initial state $x_{1}=x$.

We are presenting both data-dependent and explicit rate results. The data-dependent results are more adaptive to the concrete realization of the data and require fewer assumptions, while the explicit rate results are more straightforward to compare and give more insights into how the suboptimality and the estimators depend on $n_k$, $d$, and $K$. For technical simplicity, we assume that $\|\phi_0(x, a)\|^2  +  \|\phi_1(x, a)\|^2 \leq 1$ for all $(x, a) \in \mathcal{S} \times \Asc$, $\|\theta_{h}^0\| \leq \sqrt{d_0}$,  $\max_{k \in [K], h\in[H]} \left\{\left\|\mu^k_{h}(\mathcal{S})\right\|,\|\theta_{h}^k\|\right\} \leq \sqrt{d_1}$, which can be guaranteed after suitable normalization, where with abuse of notation, we define
$\left\|\mu^k_{h}(\mathcal{S})\right\|=\int_{\mathcal{S}}\left\|\mu^k_{h}(x)\right\| \mathrm{d} x$. Theorems
\ref{theorem: prelim} and \ref{theorem: main} characterize the data-dependent suboptimality of Algorithms \ref{alg0} and \ref{alg1}. 

\begin{theorem}[Suboptimality of the Preliminary Estimators]
\label{theorem: prelim}
In Algorithm \ref{alg0}, we set $\lambda=1$, $\alpha_k = c dH\sqrt{\zeta_k}$, where $\zeta_k = \log(2dH n_k/\xi)$,
$c>0$ is an absolute constant and $\xi \in (0,1)$ is the confidence parameter. Then $\{\widetilde \Gamma^k_h\}_{h=1}^H$ in Algorithm \ref{alg0} is a $\xi$-multi-site confidence bound of $\widetilde V^k = \{\widetilde V_h^k  \}_{h=1}^H$. For any $x \in \Ssc$, $\widetilde \pi^k = \{ \widetilde \pi_h^k \}_{h=1}^H$ in  Algorithm \ref{alg0} satisfies $\operatorname{SubOpt}^k\big(\widetilde \pi^k ;x \big) \leq 2 \alpha_k \sum_{h=1}^H\E_{k,\pi^{k,*}}\Bigl[ \sqrt{ \phi(x_h,a_h)\trans (\Lambda_h^k + \lambda \bI_d)^{-1}\phi(x_h,a_h)} \mid x_1=x\Bigr]$
with probability at least $1-\xi$, for any $k \in [K]$. 
Here $\E_{k, \pi^{k,*}}$ is taken with respect to the trajectory induced by $\pi^{k,*}$ in the underlying MDP given the fixed  $\Lambda_h^k$. 
\end{theorem}

\begin{theorem}[Suboptimality of FDTR]
\label{theorem: main}
In Algorithm \ref{alg1}, we set
\begin{equation}
  \lambda=1,\quad \alpha = c dH\sqrt{ \zeta}, \quad \text{where~~}\zeta= \log(2dHN/\xi) \,.
\label{tuning}
\end{equation}
Here $c>0$ is an absolute constant and $\xi \in (0,1)$ is the confidence parameter. Then $\{\widehat \Gamma^k_h\}_{h=1}^H$  specified in \eqref{def: gamma} is a $\xi$-multi-site confidence bound of $\widehat V^k = \{\widehat V_h^k  \}_{h=1}^H$ in Algorithm \ref{alg1}. Besides,  for any $x \in \Ssc$, with probability at least $1-\xi$, for any $k \in [K]$, $\widehat \pi^k = \{ \widehat \pi_h^k \}_{h=1}^H$ in  Algorithm \ref{alg1} satisfies \begin{equation}
\operatorname{SubOpt}^k\big(\widehat \pi^k ;x \big) \leq 2 \alpha \sum_{h=1}^H\E_{k,\pi^{k,*}}\Bigl[\sqrt{\phi(x_h,a_h)\trans ( \Sigma_h^k)^{-1}\phi(x_h,a_h) }\mid x_1=x\Bigr] \,.
\label{eq:main rate}
\end{equation}
Here $\E_{k, \pi^{k,*}}$ is taken concerning the trajectory induced by $\pi^{k,*}$ in the underlying MDP given the fixed matrix $\Sigma_h^k$. 
\end{theorem}
\begin{remark}
     The theoretical results of FDTR are structural, which means that even if the local estimator were to be altered from PEVI to another choice such as the standard value iteration algorithm (i.e., setting $\widetilde \Gamma_h^k(x,a) = 0$ in Algorithm \ref{alg0}),
    analogous theoretical results for FDTR would still be applicable. Our examination of the proof of Theorem \ref{theorem: main} elucidates that FDTR's theoretical outcomes are maintained provided the initial estimates are encompassed within the function class delineated in (S2) of Supplementary S2.  
\end{remark}
By the definition of $\Sigma_h^k$ in \eqref{def: sigma}, it is easy to show that $\Sigma_h^k \succeq \Lambda_h^k + \lambda \bI_d$, which implies 
that  the right-hand side of \eqref{eq:main rate} is finite irrespective of the behavior policy employed and  
$\phi(x_h,a_h)\trans (\Sigma_h^k)^{-1}\phi(x_h,a_h) \leq \phi(x_h,a_h)\trans ( \Lambda_h^k + \lambda \bI_d)^{-1}\phi(x_h,a_h)$.  
Comparing  Theorem \ref{theorem: prelim} and  Theorem \ref{theorem: main}, the scaling parameters $\alpha$ and $\alpha_k$ only have a difference in the logarithm term, while Theorem \ref{theorem: main} has a sharper bound for the expected term, which is owing to the utilization of multi-site information. 
\begin{theorem}[Estimation error for the homogeneous effects]
\label{theorem: estimation}
For $\alpha$ specified in Theorem \ref{theorem: main}, $\widehat \theta_{h,k}^0$ satisfies $ \|\widehat \theta_{h,k}^0 - \theta_h^0 \|  \leq  \alpha  \| (\Sigma_h^k)^{-1}\|_{\rm op}^{1/2}$ for any $k \in [K]$ with probability $1-\xi$. \end{theorem}
Recall that $\theta_h^0$ is not estimable when $\phi_0(x,a)$ is a constant vector for the trajectories in the same site (i.e., $\phi_0(x,a)$ only contains the site-level covariates), since $(\Phi_{0,h}^k)\trans \Phi_{0,h}^k = n_k \phi_0(x_h^{k,\tau},a_h^{k,\tau}) \phi_0(x_h^{k,\tau},a_h^{k,\tau})\trans$ for $\tau \in [n_k]$ is a rank-one matrix. However, when we combine $K$ sites, if $K \geq d_0$, under mild conditions, $\sum_{k=1}^K (\Phi_{0,h}^k)\trans \Phi_{0,h}^k$ will be a full rank matrix and $\theta_h^0$ can be estimated with theoretical guarantee, whose details will be given later.

The above theorems are data-dependent. To present the explicit rates related to the sample sizes $n_k, k \in [K]$, we need to impose more assumptions on the data generation mechanism. For notational simplicity, we define the following (uncentered) covariance matrices
\begin{align}
    \label{eq:cov-var}
    \Csc^k_{ij, h} = \E_{k}\bigl[\phi_i(x_h, a_h)\phi_j(x_h, a_h)\trans\bigr] \text{ and } \Csc^k_{h}= \E_{k}\bigl[\phi(x_h, a_h)\phi(x_h, a_h)\trans\bigr],
\end{align}
for $i,j= 0, 1$, $h \in [H]$, and $k \in [K]$. By definition, $\Csc^k_{00, h}$ is the covariance of homogeneous features, $\Csc^k_{11, h}$ is the covariance of heterogeneous features and $\Csc^k_{h}$ is their joint covariance matrix. 
In what follows, we illustrate the necessity of FDTR by considering the setting where the data-collecting process well explores the state-action space. Recall that $N = \sum_{k = 1}^K n_k$ is the number of trajectories of all the sites. We have the following corollary.
\begin{corollary}[Suboptimality of FDTR with Well-Explored Dataset]
\label{cor:coverage}
Assume that there exists an absolute constant $b > 0$ such that $\Csc^k_h \succeq b \bI_d$ for any $h \in [H]$ and $k \in [K]$ and $\| \phi_0(x,a)\|^2 \leq d_0 / d$ and $\|\phi_1(x, a)\|^2 \le d_1 / d$. If we choose the tuning parameters as \eqref{tuning}, it holds with probability at least $1 - 2 \xi$ that $\operatorname{SubOpt}^k\big(\widehat \pi^k ;x \big)  \le 2 \sqrt{2} \alpha H \sqrt{ \frac{1}{b d } \Big(
    \frac{d_0}{N } +  \frac{d_1}{n_k} \Big) }$ and $\|\widehat \theta_{h,k}^0 - \theta_h^0 \|  \leq \frac{\sqrt{2} \alpha}{\sqrt{N b}}$. 
Accordingly, we have $\widehat \theta_{h}^0 = \frac{1}{K}\sum_{k=1}^K \widehat \theta_{h,k}^0$ satisfying  
$\|\widehat \theta_{h}^0 - \theta_h^0 \|  \leq \frac{\sqrt{2} \alpha}{\sqrt{N b}}$.
\end{corollary}

{
\begin{remark}
    Under the well-coverage assumption in Corollary \ref{cor:coverage}, we can get similar results for LDTR. Specifically, $\operatorname{SubOpt}^k\big(\widetilde \pi^k ;x \big)  \le 2 \sqrt{2} \alpha H \sqrt{ \frac{1}{b d } \Big(
    \frac{d_0}{n_k } +  \frac{d_1}{n_k} \Big) }$ and $ \|\widetilde \theta_{h,k}^0 - \theta_h^0 \|  \leq \frac{\sqrt{2} \alpha}{\sqrt{n_k b}}$  with probability at least $1 - 2 \xi$,  
where $\widetilde \theta_{h,k}^0$ is the estimated preliminary homogeneous coefficient corresponding to the first $d_0$ elements of $\widetilde \theta_{h,k}$ defined in line 6 of Algorithm \ref{alg0}. 
\end{remark} }

\begin{remark}
\label{rmk5}
Notice that $\|\phi_0(x, a)\|^2  +  \|\phi_1(x, a)\|^2 \leq 1$. The additional assumption that $\| \phi_0(x,a)\|^2 \leq d_0 / d$ and $\|\phi_1(x, a)\|^2 \le d_1 / d$ implies that the norm is uniformly distributed among $\phi_0$ and $\phi_1$, which is only needed to obtain the explicit rate dependent on $d_0, d_1$ and $d$. Otherwise, the upper bound of $\operatorname{SubOpt}^k\big(\widehat \pi^k ;x \big)$ can be replaced by $$ 2 \sqrt{2} \alpha H \sqrt{\|\phi_0(x, a)\|^2/(\|\phi(x, a)\|^2 b N) + \| \phi_1(x, a)\|^2/(\|\phi(x, a)\|^2b n_k)}\,.$$
Besides, the assumption that $\cC_h^k \geq b \bI_d$ implies that $\phi_0(x,a)$ and $\phi_1(x,a)$ can not be fully linear dependent, that is, a feature exists in both $\phi_0$ and $\phi_1$. Otherwise, $\cC_n^k$ may not be full rank, violating the lower bounded eigenvalue assumption. However, the features in $\phi_0$ and $\phi_1$ can be correlated, as the age covariate example in Section \ref{sec: problem}. 
\end{remark}

The assumption that $\Csc^k_h \succeq b \bI_d$ assumes that the dataset well explores the state and action space \citep{duan2020minimax, jin2020pessimism} in the sense that each direction in the feature space is well explored in the dataset $\Dsc_k$. When considering the well-explored dataset, Corollary \ref{cor:coverage} shows that FDTR attains the suboptimality of order $\tilde  O\big( H^2 d \sqrt{d_0/(d N) + d_1/ (d n_k)} \big)$, while the suboptimality of the preliminary estimator is $\tilde O( H^2 d /\sqrt{ n_k} )$ \citep{jin2020pessimism}. { So FDTR will have a tighter rate than LDTR when $d_1 \ll d$.} Thus, FDTR is more efficient than LDTR in eliminating the suboptimality arising from the site-level features.  In specific, when $d_1 = o(d/K)$,  the suboptimality of FDTR is $O(H^2 d\sqrt{1 / N})$  which achieves the desirable rate of $N^{-1/2}$. Given our assumption of heterogeneity across the $K$ sites as outlined in models \eqref{def: transition} and  \eqref{def: reward}, only the homogeneous component can be universally shared among sites, boasting an effective sample size of $N = \sum_k n_k$. In contrast, the heterogeneous component maintains an effective sample size of $n_k$, precisely mirroring the rate of $\widehat \pi^k$ as defined in Corollary \ref{cor:coverage}. Besides, in Corollary \ref{cor:coverage}, we further show that the statistical rate of the homogeneous parameter $\theta_h^0$ is $\tilde  O\big( H d N^{-1/2} )$ even if $d_1 \asymp d$, which is better than the rate of LDTR with a factor of $\sqrt{K}$. When $H$ and $d$ are fixed, it achieves the standard parametric rate $O(N^{-1/2})$ which performs like pooling all data together. 

\begin{remark}
Consider the setting where the dataset well explores the state-action space, which recovers the online setting after the exploration. For example, when $d_0/d \approx 1$ and $d_1/d \approx 0$, our setting recovers the homogeneity setting where the data of all the sites follow the same distribution. In such a setting, our sample complexity for achieving the $\epsilon$-optimal solution is at the order of $O(1/\epsilon^2)$, which matches the rate of \cite{fan2021fault}.   Given that our estimators reach these optimal rates (up to logarithmic factors), there is no statistical advantage to permitting multiple communication rounds. 
\end{remark}

However, the assumption that $\Csc^k_h \succeq b \bI_d$ will be violated when ${\rm rank} ( \Csc^k_{00,h} ) < d_0$, which happens when $d_0 > 1$ and $\phi_0$ is constant in the same site. To this end, $\theta_h^0$ cannot be estimated by using the single site data only. However, $\theta_h^0$ can still be estimated by FDTR as shown by the following corollary. 
\begin{corollary}[Suboptimality of FDTR with Homogeneous Covariates] 
\label{cor:site-spec}
We assume that there exists an absolute constant $b > 0$ such that $\Csc_{11, h}^k \succeq b \bI_{d_1}$ and 
\begin{equation}
 \lambda_{\min} \Big(\frac{1}{K} \sum_{k =1}^K \Csc_{00, h}^k \Big) \geq \max\{ b, 2 b^{-1} \|\Csc_{10, h}^k\|^2 \} \quad \forall k \in [K],  h \in [H] \,,
 \label{god}
\end{equation}
and $\| \phi_0(x,a)\|^2 \leq d_0 / d$ and $\|\phi_1(x, a)\|^2 \le d_1 / d$, $\forall (x,a) \in \Ssc \times \Asc$. If we choose the tuning parameters as \eqref{tuning}, it holds with probability at least $1 - 2 \xi$ that $ \operatorname{SubOpt}^k\big(\widehat \pi^k ;x \big)  \le 2 \sqrt{2} \alpha H  \sqrt{ \frac{1}{b d } \Big(
    \frac{d_0}{N} +  \frac{d_1}{n_k} \Big) }$ and $\|\widehat \theta_{h,k}^0 - \theta_h^0 \|  \leq \frac{\sqrt{2} \alpha}{\sqrt{N b}}$. Accordingly, we have   
$\|\widehat \theta_{h}^0 - \theta_h^0 \|  \leq \frac{\sqrt{2} \alpha}{\sqrt{N b}}$.
\end{corollary}
By imposing the assumption in \eqref{god}, we get the same suboptimality and statistical rate as in Corollary \ref{cor:coverage}.
In particular, the assumption in \eqref{god} allows for ${\rm rank} ( \Csc^k_{00,h} ) < d_0$ for a single site $k$, which allows $\phi_0$ to be constant within each site. Recall that $\Csc^k_{h}$ is the (uncentered) covariance between the homogeneous feature $\phi_0$ and the heterogeneous feature $\phi_1$. The inequality in \eqref{god} holds when such a correlation is relatively small. Besides, we require $\lambda_{\min} (K^{-1} \sum_{k =1}^K \Csc_{00, h}^k)$ to be relatively large, which holds when the space of site-level features is well-explored by the dataset $\Dsc$.

 \subsection{Extension to Non-parametric Estimation}

Theorems \ref{theorem: prelim} and \ref{theorem: main} rely on the linear MDP assumption, which is restrictive in many real-world applications. However, FDTR is robust to model misspecified when the linear MDP structure is violated. As long as the discrepancy is not significant, FDTR still enjoys theoretical guarantees as shown in Supplementary S8. To further enhance the generalizability of FDTR, we extend it to non-parametric estimation. Specifically, instead of imposing the parametric assumptions 
\eqref{def: transition} and \eqref{def: reward} on the MDP, we consider using basis expansions with the number of basis functions growing with sample size to approximate the transition kernels and reward functions. Assume that $x_h = (x_{0h}\trans,x_{1h}\trans)\trans$ where $x_{0h}$ is the homogeneous feature and  $x_{1h}$ is the heterogeneous feature, and $x_{0h}, x_{1h}, a_h \in \R^{m_0}, \R^{m_1}, \R^{m_2}$, respectively, and let $m = m_0 + m_1 + m_2$. For simplicity, we assume that the homogeneous features in $\phi_0$ remain constant during the whole horizon, and $\Asc$ and $\Ssc$ are bounded, which are common for many real-world applications. We also assume that $\P_k^k(x_{0(h+1)} \mid x_h, a_h)$ and $\E \big[ r^k_{h}(x_{h}, a_{h}) \mid x_{h} = x, a_{h} =a \big]$ have $q$th order continuous and bounded derivative with respect to $x_{0(h+1)}$, $x_h$ and $a_h$. Let $\{\psi_{0s}( x_{0h} ,a_h), s \in \N^+\}$ and $\{\psi_{1 s}( x_{1h} ,a_h), s \in \N^+\}$ be the basis functions of $(x_{0h},a_h)$ and $(x_{1h},a_h)$, respectively. By the property of multivariate basis expansion \citep{hastie2009elements}, the basis functions of $(x_{1h},a_h,x_{1(h+1)})$ can be represented by $\{\psi_{1 s}(x_{1h} ,a_h)\psi_{2 t}( x_{1(h+1)} ), s,t \in \N^+\}$, where $\{ \psi_{2 t}( x_{1(h+1)} ), t \in \N^+\}$ are the basis functions of $x_{1(h+1)}$. We define $\phi_{0}(x_h,a_h) = \big( \psi_{01}(x_{0h},a_h), \ldots, \psi_{0 d_0}(x_{0h},a_h) \big)\trans \in \R^{d_0}$
as the homogeneous basis vector with the number of bases $d_0$, $\phi_{1}(x_h,a_h) = \big( \psi_{11}(x_{1h},a_h), \ldots, \psi_{1 d_1}(x_{1h},a_h) \big)\trans \in \R^{d_1}$ as the heterogeneous basis vector with the number of bases $d_1$ and $d = d_0 + d _1$. Examples of basis functions are B-splines or Fourier series basis terms. Then we can use the basis functions $\phi_0$ and  $\phi_1$ to estimate the rewards and value functions. In addition, we define a new multi-site confidence bound as 
\begin{equation}
   \overline \Gamma_{h}^k(x, a) =  \big( \alpha + \sqrt{N}(H+1) \eta\big)  \sqrt{\phi(x, a)\trans  (\Sigma_h^{k})^{-1}  \phi(x, a)}	+ (H +1) \eta\,,
\end{equation}
where $\alpha$ and $\eta$ are two tuning parameters.

\begin{theorem}[Suboptimality of FDTR under Nonparametric Estimation]
\label{theorem: spline}
With the tuning parameters $\alpha$ set as \eqref{tuning} and $\eta = c_3  m  d^{-(q+1)/m}$ for some constant $c_3$. Then we have, for any $k \in [K]$ and for any $x \in \Ssc$, $\widehat \pi^k = \{ \widehat \pi_h^k \}_{h=1}^H$ satisfies $\operatorname{SubOpt}^k\big(\widehat \pi^k ;x \big) \leq C H   N^{\frac{m}{ 2 (m + q + 1)}} \sqrt{ \zeta} \sum_{h=1}^H\E_{k,\pi^{k,*}}\Bigl[\sqrt{\phi(x_h,a_h)\trans ( \Sigma_h^k)^{-1}\phi(x_h,a_h) }\mid x_1=x\Bigr]  + C H  N^{\frac{-(q+1)}{2(m + q + 1)}}$ 
with probability at least $1- \xi$ for some constant $C$  dependent on $m$ and $q$.
\end{theorem}
Other basis functions can also be used such as multivariate trigonometric polynomials, with slightly different smoothness requirements for the transition kernel and rewards while yielding similar results. More discussion on the approximation property of these basis functions can be found in \cite{schultz1969multivariate}. Under the same well-explored assumption as Corollary \ref{cor:coverage}, we have the following corollary. 

\begin{corollary}
\label{cor:final}
Assume that there exists an absolute constant $b > 0$ such that $\Csc^k_h \ge b \bI_d$ for any $h \in [H]$ and $k \in [K]$ and $\| \phi_0(x,a)\|^2 \leq d_0 / d$ and $\|\phi_1(x, a)\|^2 \le d_1 / d$. If we choose the tuning parameters as Theorem \ref{theorem: spline}, it holds 
$\operatorname{SubOpt}^k\big(\widehat \pi^k ;x \big)  \le C H   N^{\frac{m}{ 2 (m + q + 1)}} \sqrt{ \zeta} \sqrt{ \frac{1}{b d } \Big( \frac{d_0}{N } +  \frac{d_1}{n_k} \Big) } +  C H  N^{\frac{-(q+1)}{2(m + q + 1)}}$
with probability at least $1- 2 \xi$.  
\end{corollary}

 When $d_1 = o(d/K)$, Corollary \ref{cor:final} reveals that the suboptimality of FDTR is $\tilde O( H  N^{\frac{-(q+1)}{2(m + q + 1)}}) $. When $q$ is large, indicating sufficiently smooth transition and reward functions, the term ${\frac{-(q+1)}{2(m + q + 1)}}$ approximate $-\frac{1}{2}$. This suggests that the rate from Corollary \ref{cor:final} converges to the optimal rate of $N^{-1/2}$.

\section{Experiments}
\label{sec:num}
\subsection{Simulations}
\label{sec: simulation}
We perform extensive simulations to evaluate the performance of FDTR. In particular, we focus on how well the proposed methods work as we vary the dimension and cardinality of the state and action space, respectively, episode length, and the number of sites. We simulate the data according to the following linear MDP. We generate random vectors $\theta_h^0\in\mathbb{R}^{d_0}$, $\theta_h^k\in\mathbb{R}^{d_1}$ for  $k\in[K]$ and matrix $\theta^\mu\in\mathbb{R}^{d_1\times H}$ following element-wise ${\rm Uniform}(0,1)$ distribution, which are then normalized to satisfy the assumptions in Section \ref{sec: theorem}. We generate rewards using a Gaussian distribution centered at $ \left\langle\phi_0(x, a), \theta_{h}^0\right\rangle + \left\langle\phi_1(x, a), \theta_{h}^k\right\rangle$ according to \eqref{def: reward}. Importance sampling is used to sample states from the state transitions density: given state action pair $(x,a)$ we draw the next state from a proposal distribution: $x'\sim q(\cdot|x,a)$ and re-weigh samples using $\P^k_{h}\left(x'|x,a\right)/q(\cdot|x,a)$ with $\P^k_{h}\left(x'| x, a\right)=\big\langle\phi_1(x, a), \mu_{h}^k (x')\big\rangle,$ and we use column $h$ in $\theta^\mu$ as $\mu_{h}^k$. 
We clip state vectors and normalize rewards such that $x_h^{k,\tau}\in\mathcal{S}$, $r_h^{k,\tau}\in[0,1]$ for all $h\in[H]$, $k\in[K]$, $\tau\in[n]$. Finally, we use linear functions with an action-interaction term to represent the treatment effect for $\phi_0$ and $\phi_1$, specifically, $\phi_l(x_h,a_h)=(x_{lh},a_h x_{lh})$ for $l=0,1$.

 Based on the theoretical analysis presented in Theorems \ref{theorem: prelim} and \ref{theorem: main}, we set the parameter $\lambda=1$. The parameter $\xi$ defines the probabilities for the suboptimality to be valid and is assigned a value nearing one, specifically $\xi=.99$. The remaining hyper-parameter $c$ within $\alpha$ is determined using $5$-fold cross-validation on the training data to optimize the estimated value functions across sites and trajectories, resulting in $c=.005$. We implement different benchmark methods for comparison. The first one is LDTR in Algorithm \ref{alg0}, which yields a policy $\widetilde \pi^{k} = \{\widetilde \pi_h^{k}  \}_{h=1}^H$ for a fixed $k$. We then use the $K$ locally trained LDTRs to define LDTR (MV), which given a state, uses majority voting across the $K$ policies to select an action. We also train a $Q$-learning policy in a single hospital site. We use ordinary least squares to estimate the $Q$-functions, as these are linear on $\phi(x,a)$. There are three variations of this method. The first, $Q$-learn $(1)$ uses a single $Q$-function for any time-step $h$ trained locally in each site. The second selects the most popular action among the $K$ locally trained $Q$ functions; we call this $Q$-learn (1-MV). The third one, $Q$-learn $(H)$, is also locally trained $Q$-learning; however, this one uses a different function $Q_h(\cdot,\cdot)$ for each of the $H$ time steps. 

\begin{figure}[htbp]
    \centering
    \begin{subfigure}[b]{0.31\textwidth}
        \includegraphics[width=\textwidth]{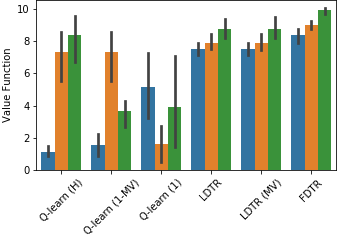}
        \caption{$(d,|\mathcal{A}|,H)=(8,6,15)$}
        \label{rfidtest_a}
    \end{subfigure}
    \begin{subfigure}[b]{0.3\textwidth}
        \includegraphics[width=\textwidth]{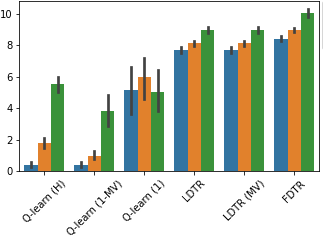}
        \caption{$(d,|\mathcal{A}|,H)=(8,6,15)$}
        \label{rfidtest_b}
    \end{subfigure}
    \begin{subfigure}[b]{0.37\textwidth}
        \includegraphics[width=\textwidth]{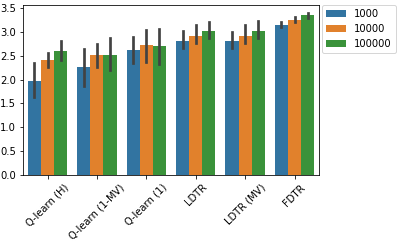}
        \caption{$(d,|\mathcal{A}|,H)=(20,2,5)$}
        \label{rfidtest_c}
    \end{subfigure}
    \caption{Mean value function for FDTR and benchmarks trained on $K=$5 sites for (\ref{rfidtest_a}), (\ref{rfidtest_c}) and $K=10$ for (\ref{rfidtest_b}). We show the value function averaged over the $K$ sites for increasing sample size. Error bars show $95\%$ CI. Finally, $(d,|\mathcal{A}|,H)$ stands respectively for the dimension of state space, cardinality of action space, and episode length.}
    \label{rfidtag_testing}
\end{figure}

Figure \ref{rfidtag_testing} shows empirical results for the performance of FDTR, LDTR, and its majority voting version, and their $Q$-learning counterparts for different settings. It is clear that across settings, FDTR outperforms in terms of the mean value. These results are expected since FDTR efficiently aggregates data across sites to estimate the policy. As the state dimension becomes large all methods decrease in performance for any given sample size, this is natural as estimated parameters have larger standard errors. In these cases such as Figure (\ref{rfidtest_c}), FDTR still performs best relative to local and majority voting methods. It is worth noting that the larger sample size benefits FDTR in terms of better policy estimation and also decreases uncertainty in terms of its performance, as illustrated by the narrower standard errors. Additionally, FDTR can estimate coefficients at the hospital level, which other methods cannot do locally. Hospital-level covariate estimation allows FDTR to tailor the policy to local hospital characteristics, translating into a better policy function.

\subsection{FDTR for Sepsis treatment Across Intensive Care Units}\label{sec: real}

We further illustrate the performance of FDTR in optimizing the treatment of sepsis, an acute and life-threatening condition in which the body reacts to infection improperly. Clinicians must determine fluid management strategies (actions) based on a patient's clinical state including vital signs and lab tests (state) at different stages (time) \citep{rhodes2017surviving}.  While survival is an important long-term reward to target, it is not a short-term parameter that can be tracked by physicians to adjust treatment strategies during hospitalization. We use an alternative reward based on serum lactate level which is an established marker for systemic tissue hypoperfusion that reflects cellular dysfunction in sepsis and is predictive of sepsis mortality \citep{lee2016new,ryoo2018lactate}. We identify optimal fluid management strategies for sepsis patients using longitudinal serum lactate levels as short-term rewards based on the MIMIC-IV EHR data \citep{mimiciV,raghu2017continuous}. 

Using publicly available SQL queries\footnote{\url{https://github.com/yugangjia/Team-Sepsis/blob/main/Sepsis\_cohort.sql}}, we extracted MIMIC-IV  sepsis cohort which consists of data from $9$ different care units with $25,568$ patient trajectories (episodes) \citep{sonabend2020expert}. The site-level covariates include an indicator for whether the care unit is not an ICU (noICU) and the patient flow  (Flow) defined as the total duration of all patients at each care unit (normalized by the total duration of all patients at all care units), which characterizes the size of the care unit. Patient-level covariates include measurements of weight, temperature, systolic blood pressure, hemoglobin, and potassium levels. We use the feature maps $\phi_l(x_h,a_h)=(x_{lh},a_h x_{lh},a_h^2 x_{lh})$ for $l=0,1$ to incorporate the second order action-interaction effect, yielding a state-space dimension of $21$. Each time step consists of a four-hour interval, and trajectories consist of $H=5$ time steps. The reward is proportionately inverse to the lactic acid level, which is a clinical index measuring the severity of Sepsis \citep{lee2016new}, { and we transform it to be between $0$ and $1$.}
The action space corresponds to the dosage median for intravenous fluids yielding three different actions. We use a step-importance sampling estimator \citep{thomas2016data, gottesman2018evaluating} for the value function, which we use to compare methods. We use a $50\%$-$50\%$ split at each care unit for training and test data, respectively. We train all methods described in Section \ref{sec: simulation} and evaluate them at each unit. The hyperparameters are chosen the same way as Section  \ref{sec: simulation}. 
Figure (\ref{a}) shows the value function estimate averaged over test sets. Estimates are computed for all methods along with the $95\%$ confidence interval. 
FDTR performs significantly better than the rest of the methods. This is because FDTR aggregates information across care units and can estimate site-specific effects, which yields a superior policy function.  As a sensitivity analysis, we have further fit a more complex model including non-linear bases and shown that including non-linear effects does not improve the value function, in part due to the bias-variance trade-off with limited sample size. The details and results are given in Supplementary S9.

\begin{figure}[ht]
    \centering
    \begin{subfigure}[b]{0.42\textwidth}
        \includegraphics[width=\textwidth]{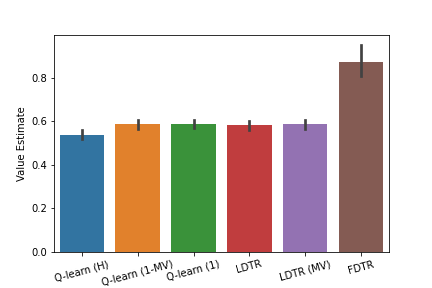}
        \caption{Estimated value function.}
        \label{a}
    \end{subfigure}
    \begin{subfigure}[b]{0.56\textwidth}
        \includegraphics[width=\textwidth]{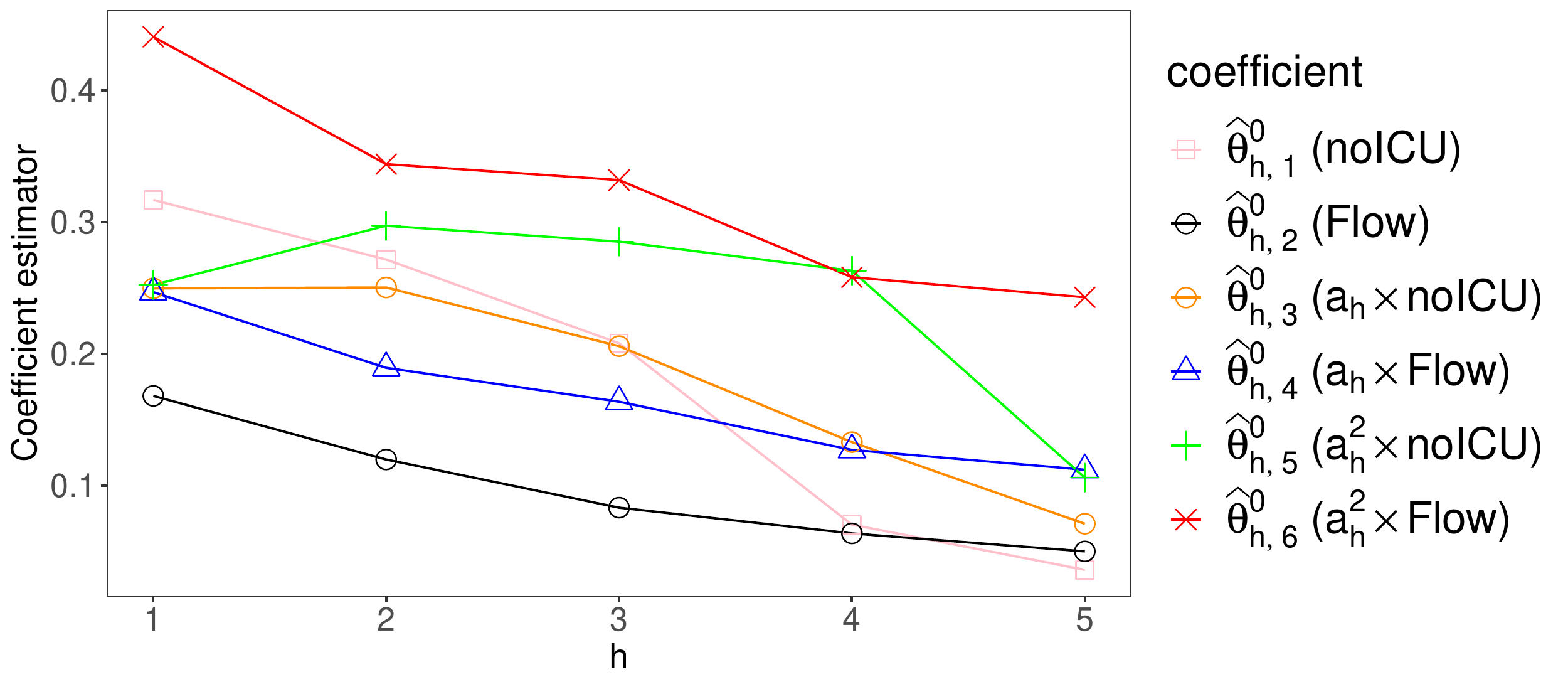}
        \caption{Estimated homogeneous coefficients.}
        \label{b}
    \end{subfigure}
     \caption{ (\ref{a}) Value function estimates on the sepsis 
     held out data across $9$ ICU. (\ref{b}) Homogeneous coefficients estimated by FDTR at different time points.}
     \label{fig3}
\end{figure}

 Finally, we show the homogeneous coefficients estimated by FDTR averaged over all care units in Figure (\ref{b}). First, we observe that all of the estimated coefficients are positive. For example, the positive coefficients related to the variable ``Flow" show that patients in larger care units (with a higher patient flow) might experience better outcomes, given the same treatment action. This could be due to factors such as the availability of resources, staffing levels, or specific expertise present in larger care units that may contribute to the more effective initial management of sepsis \citep{rudd2018global}. Furthermore, coefficients of site-level covariates tend to decrease over time. The trend implies that the influence of site-level covariates on treatment decisions decreases as the treatment progresses. This is expected because treatment decisions at later steps are more likely to be driven by patient-specific factors, such as their response to previous treatments and evolving clinical conditions, rather than site-level factors \citep{zhang2020individualized}.

In conclusion, the clinical interpretation of the FDTR results for sepsis management underscores the importance of understanding the influence of care unit characteristics and treatment actions on patient outcomes. This knowledge can help healthcare providers make more informed decisions and tailor their treatment strategies for better patient care.

\section{Discussion}
\label{sec: conclusion}

The current implementation of FDTR allows users to specify features with common effects across sites and features with site-specific effects. It is also plausible to allow for more data-adaptive co-training by imposing shrinkage estimation on the learned site-specific parameters, following strategies similar to the heterogeneity-aware federated regression methods \citep{duan2019heterogeneity}. We show the sample complexity of FDTR comparable to the homogeneous case under our model and it is interesting to find if the sample complexity involving shrinkage estimation will have similar results.

An interesting extension to our method would be to equip FDTR with doubly robust (DR) models. For example, if we additionally model the treatment propensity observed in the data, a DR version of FDTR could potentially achieve suboptimality even if the linear MDP assumption is incorrect. Another extension is to develop communication-efficient FDTR algorithms for more general models, which warrants future exploration.

\section*{Acknowledgement}

The authors thank the editor, the associate editor, and four referees for their constructive comments and suggestions. Tianxi Cai acknowledges the support of NIH (R01LM013614, R01HL089778). Junwei Lu acknowledges the support of NIH (R35CA220523, R01ES32418, U01CA209414). Zhaoran Wang acknowledges the support of NSF (Awards 2048075, 2008827, 2015568, 1934931), Simons Institute (Theory of Reinforcement Learning), Amazon, J.P. Morgan, and Two Sigma. The authors report there are no competing interests to declare. 






\bibliographystyle{chicago}
\bibliography{references}

\end{document}


\begin{center}
\textit{\large Supplementary Material to}
\end{center}

\begin{center}
{\LARGE Federated Offline Reinforcement Learning}
\vskip10pt
\end{center}

\setcounter{section}{0}
\renewcommand{\thesection}{S.\arabic{section}}
\setcounter{equation}{0}
\counterwithout{equation}{section}
\renewcommand{\theequation}{S.\arabic{equation}}
\setcounter{theorem}{0}
\counterwithout{theorem}{section}
\renewcommand{\thetheorem}{S.\arabic{theorem}}
\renewcommand{\thelemma}{S.\arabic{theorem}}

\renewcommand{\thesection}{S\arabic{section}}  
\renewcommand{\thetable}{S\arabic{table}}  
\renewcommand{\thefigure}{S\arabic{figure}}
\renewcommand{\theequation}{S\arabic{equation}}

\section{Proof of Theorem 1}

Algorithm 1 is the adoption of Algorithm $2$ of \cite{jin2020pessimism} to the linear MDP model, and thus Theorem 1 follows straightforwardly from Theorem $4.4$ of \cite{jin2020pessimism}.

\section{Proof of Theorem 2}

For the $k$th site and any function $f: \mathcal{S} \rightarrow \mathbb{R}$, we define the
transition operator as
$$
(\P_{h}^k f)(x, a)=\E_k [f (x_{h+1}) \mid  x_{h}=x, a_{h}=a]\,.
$$
By (5), we know that there exists $\theta_h = \big( (\theta_h^0)\trans, (\bar \theta_h^1)\trans,\ldots, (\bar \theta_h^K)\trans \big)\trans \in \R^{d_K}$ such that $\B^k_{h} \widehat{V}^k_{h+1} = \phi^k(x, a) \trans  \theta_h $ and $\B^j_{h} \widetilde{V}^j_{h+1} = \phi^j(x, a) \trans  \theta_h$, for $j \in [K]/\{k\}$. To simplify the notation, denote $Y_h \in \R^{N}$ as the concatenation of $Y_h^k(\widehat V_{h+1}^k)$ and $Y_h^j(\widetilde V_{h+1}^j)$ for $j \in [K]/\{k\}$. Now we upper bound the difference between $\B^k_{h} \widehat{V}^k_{h+1}$ and $\widehat{\B}^k_{h} \widehat{V}^k_{h+1}$. For all $h \in[H]$ and all $(x, a) \in$
$\mathcal{S} \times \Asc$, we have
$$
\begin{array}{l}
(\B^k_{h} \widehat{V}^k_{h+1})(x, a)-(\widehat{\B}^k_{h} \widehat{V}^k_{h+1})(x, a)= \phi^k(x, a) \trans ( \theta_h - \widehat \theta_{h,k}) \\
= \phi^k(x, a) \trans  \theta_h  -  \phi^k(x, a) \trans(\Lambda_h  +\bH_{k,\lambda})^{+} \Phi_h\trans  Y_h \\
=\underbrace{\phi^k(x, a)\trans  \theta_h - \phi^k(x, a)\trans  (\Lambda_h  +\bH_{k,\lambda})^{+} \Phi_h\trans  \Phi_h   \theta_h}_{(\mathrm{i})} -\underbrace{\phi^k(x, a)\trans (\Lambda_h  +\bH_{k,\lambda})^{+}  \Phi_h\trans ( Y_h -  \Phi_h \theta_h )}_{(\mathrm{ii})} .
\end{array}
$$
Thus, by triangle inequality, we have
$$
\big|(\B_{h}^k \widehat{V}^k_{h+1})(x, a)-(\widehat{\B}_{h}^k \widehat{V}^k_{h+1})(x, a)\big| \leq|(\mathrm{i})|+|(\mathrm{ii})| .
$$
In the sequel, we bound term (i) and term (ii) separately. Since $\widehat{V}^k_{h+1} \in[0, H-1]$, by Lemma \ref{lemma: norm}, we have 
$\|\theta_h^0\|^2 +  \|\bar \theta_h^k\|^2\leq H^2 d$. By direct computation, we bound term (i) by 
\begin{equation*}
\begin{aligned}
 |(\mathrm{i})| & =  |\phi^k(x, a)\trans \big(  \theta_h - (\Lambda_h  +\bH_{k,\lambda})^{+}(\Lambda_h +\bH_{k,\lambda} - \bH_{k,\lambda})\theta_h \big)| \\
 & =  |\phi^k(x, a)\trans  (\Lambda_h  +\bH_{k,\lambda})^{+} \bH_{k,\lambda} \theta_h|  =  \lambda  \phi(x, a)\trans (\Sigma_h^{k})^{-1}  \big( (\theta_h^0)\trans, (\theta_h^k)\trans \big)\trans\\
 & \leq   \lambda  \|\big( (\theta_h^0)\trans, (\theta_h^k)\trans \big)\trans\|_{(\Sigma_h^{k})^{-1}  } \|\phi(x, a)\|_{(\Sigma_h^{k})^{-1} } \leq H \sqrt{ d \lambda} \sqrt{\phi(x, a)\trans(\Sigma_h^{k})^{-1} \phi(x, a)} \,.
\end{aligned}
\end{equation*}
Here the third equality comes from the matrix block-wise inverse formula, the first inequality comes from Cauchy-Schwarz inequality, and the last inequality follows from the fact that $$
\big\|\big( (\theta_h^0)\trans, (\theta_h^k)\trans \big)\trans\big\|_{(\Sigma_h^{k})^{-1}}\leq\big\|(\Sigma_h^{k})^{-1}\big\|_{\rm op}^{1 / 2}  \big\|\big( (\theta_h^0)\trans, (\theta_h^k)\trans \big)\trans\big\| \leq H \sqrt{ d / \lambda}.
$$
It remains to bound term (ii). To simplify the notation, for any
$j \in [K], h \in[H]$ and $\tau \in[n_j]$, and any value function $V: \mathcal{S} \rightarrow[0, H]$, we define
\begin{equation*}
   \epsilon_{h}^{j,\tau}(V) = r_{h}^{j,\tau}+V(x_{h+1}^{j,\tau})-(\B^j_{h} V)(x_{h}^{j,\tau}, a_{h}^{j,\tau}); \; \epsilon_{h}^{j}(V) = (\epsilon_{h}^{j,1}(V),\ldots, \epsilon_{h}^{j,n_j}(V))\trans
\end{equation*}
and 
$
\epsilon_{h} = \big( (\epsilon_{h}^1)\trans, \ldots, (\epsilon_{h}^K)\trans \big)\trans$, where $\epsilon_{h}^j = \big( \epsilon_{h}^{j,1}(\widetilde V^j_{h+1}) , \ldots, \epsilon_{h}^{j,n_j}(\widetilde V^j_{h+1}) \big)\trans
$
for $j \in [K]/\{k\}$, and $\epsilon_{h}^k = \big( \epsilon_{h}^{j,1}(\widehat V^k_{h+1}) , \ldots, \epsilon_{h}^{j,n_k}(\widehat V^k_{h+1}) \big)\trans$. Using such a notation, we bound term (ii) using Cauchy-Schwarz inequality as
\begin{equation*}
\begin{aligned}
 |(\mathrm{ii})| & =\left|\phi^k(x, a)\trans (\Lambda_h  +\bH_{k,\lambda})^{+}  \Phi_h\trans \epsilon_h \right| \\
& = \left| \phi(x, a)\trans  (\Sigma_h^{k})^{-1} \begin{bmatrix}
(\Phi_{0,h}^k)\trans \epsilon_{h}^k - \sum_{j \neq k}^{K}  (\Phi_{0,h}^j)\trans ( \bI_{n_j} - \bP_{\Phi_{1,h}^j} ) \epsilon_{h}^j \\
(\Phi_{1,h}^k)\trans \epsilon_{h}^k 
\end{bmatrix} \right | \\
& \leq \underbrace{\left\|  \begin{bmatrix}
(\Phi_{0,h}^k)\trans \epsilon_{h}^k - \sum_{j \neq k}^{K}  (\Phi_{0,h}^j)\trans \big( \bI_{n_j} - \bP_{\Phi_{1,h}^j} \big) \epsilon_{h}^j \\
(\Phi_{1,h}^k)\trans \epsilon_{h}^k 
\end{bmatrix}  \right\|_{(\Sigma_h^{k})^{-1}}}_{(\mathrm{iii})} \sqrt{\phi(x, a)\trans (\Sigma_h^{k})^{-1} \phi(x, a)} \,.
\end{aligned}
\end{equation*}
In the sequel, we upper bound term (iii) via concentration inequalities. An obstacle is that $\widehat{V}^k_{h+1}$ and $\widetilde{V}^j_{h+1}, j \neq k$ depends on $\big\{(x_{h}^{j,\tau}, a_{h}^{j,\tau})\big\}_{j,\tau=1}^{K,n_j}$ via $\big\{(x_{h^{\prime}}^{j,\tau}, a_{h^{\prime}}^{j,\tau})\big\}_{j \in [K],\tau \in[n_j], h^{\prime}>h}$, as it is constructed based on the dataset $\Dsc$. To this end, we resort to uniform concentration inequalities to upper bound
\begin{equation}
    \sup _{V^j \in \mathcal{V}_{h+1}(R, B, \lambda), j \in [K]} \underbrace{
\left\|  \begin{bmatrix}
(\Phi_{0,h}^k)\trans \epsilon_{h}^k(V^k) - \sum_{j \neq k}^{K}  (\Phi_{0,h}^j)\trans ( \bI_{n_j} - \bP_{\Phi_{1,h}^j} ) \epsilon_{h}^j(V^j) \\
(\Phi_{1,h}^k)\trans \epsilon_{h}^k (V^k)
\end{bmatrix}  \right\|_{(\Sigma_h^{k})^{-1}}}_{f\big( \{ V^{j}\}_{j \in [K]}\big)}
\label{def: f}
\end{equation}
for each $h \in[H]$, where it holds that $\widehat{V}^k_{h+1}, \widetilde{V}^j_{h+1}, j \neq k \in \mathcal{V}_{h+1}(R, B, \lambda)$. Here for all $h \in[H]$, we define the
function class
\begin{equation}
    \mathcal{V}_{h}(R, B, \lambda)=\left\{V_{h}(x ; \theta, \alpha, \Sigma): \mathcal{S} \rightarrow[0, H]\text{ with }  \|\theta\| \leq R, \alpha \in[0, B], \Sigma \succeq \lambda  \bI_d \right\}
\end{equation}
where $V_{h}(x ; \theta, \alpha, \Sigma)=\max _{a \in \Asc}\big\{\min \big\{\phi(x, a)\trans \theta-\alpha  \sqrt{\phi(x, a)\trans \Sigma^{-1} \phi(x, a)}, H-h+1\big\}^{+}\big\}.$ Notice that we use the feature map $\phi \in \R^d$ rather than $\phi^k \in \R^{d_K}$. 

For all $\epsilon>0$ and all $h \in[H]$, let $\mathcal{N}_{h}(\epsilon ; R, B, \lambda)$ be the minimal $\epsilon$-cover of $\mathcal{V}_{h}(R, B, \lambda)$ with respect to the supremum norm. In other words, for any function $V \in \mathcal{V}_{h}(R, B, \lambda)$, there exists a function $V^{\dagger} \in \mathcal{N}_{h}(\epsilon ; R, B, \lambda)$ such that
$$
\sup _{x \in \mathcal{S}}\big|V(x)-V^{\dagger}(x)\big| \leq \epsilon \,.
$$
Meanwhile, among all $\epsilon$-covers of $\mathcal{V}_{h}(R, B, \lambda)$ defined by such a property, we choose $\mathcal{N}_{h}(\epsilon ; R, B, \lambda)$
as the one with the minimal cardinality. 

Recall the construction of $\widetilde V_h^j$ in the Algorithm 1, and $\alpha_j \leq \alpha$ for $j \in [K]$ and $j \neq k$,  By Lemma \ref{lemma: norm}, we have
$\bigl \|\big( (\widehat{\theta}_{h,k}^0)\trans,(\widehat{\theta}_{h,k}^k)\trans \big)\trans  \bigr \| \leq H \sqrt{N / \lambda}$ and $\|\widetilde \theta_h^j\| \leq H \sqrt{n_j / \lambda}$. Hence, for all $h \in[H]$, we have
$$
\widehat{V}_{h+1}^k, \widetilde{V}_{h+1}^j \in \mathcal{V}_{h+1}\left(R_{0}, B_{0}, \lambda\right), \quad \text { where } R_{0}=H \sqrt{N / \lambda}, \text{ and } B_{0}=2 \alpha \,.
$$
Here $\lambda>0$ is the regularization parameter and $\alpha>0$ is the scaling parameter, which are specified
in Algorithm 2. For notational simplicity, we use $\mathcal{V}_{h+1}$ and $\mathcal{N}_{h+1}(\epsilon)$ to denote $\mathcal{V}_{h+1}(R_{0}, B_{0}, \lambda)$ and $\mathcal{N}_{h+1}\left(\epsilon ; R_{0}, B_{0}, \lambda\right)$, respectively. As it holds that $\widehat{V}^k_{h+1}, \widetilde{V}^j_{h+1}, j \neq k \in \mathcal{V}_{h+1}$ and $\mathcal{N}_{h+1}(\epsilon)$ is an $\epsilon$-cover of $\mathcal{V}_{h+1}$, there exists functions $V_{h+1}^{j,\dagger} \in \mathcal{N}_{h+1}(\epsilon) , j\in [K]$ such that
$$
\sup _{x \in \mathcal{S}} |\widehat{V}^k_{h+1}(x)-V_{h+1}^{k, \dagger}(x)| \leq \epsilon; \; \sup _{x \in \mathcal{S}}|\widetilde{V}^j_{h+1}(x)-V_{h+1}^{j, \dagger}(x)| \leq \epsilon , j \in [K]/\{k\}. 
$$
Hence, given $V_{h+1}^{j, \dagger}, j \in [K]$ and $\widehat{V}^k_{h+1}$, $\widetilde{V}^j_{h+1}, j \in [K]/\{k\}$ , the monotonicity of conditional expectations implies
\begin{equation}
    \begin{array}{l}
\big|(\P_{h}^k V_{h+1}^{k,\dagger})(x, a)- (\P^k_{h} \widehat{V}^k_{h+1})(x, a)\big| \\
\quad=\big|\E[V_{h+1}^{k,\dagger}(x_{h+1})  - \widehat{V}^k_{h+1} (x_{h+1}) \mid  x_{h}=x, a_{h}=a] \big| \\
\quad \leq \E\big[ |V_{h+1}^{k,\dagger}(x_{h+1}) - \widehat{V}^k_{h+1} (x_{h+1})| \mid  x_{h}=x, a_{h}=a \big] \\ \quad \leq \epsilon, \quad \forall(x, a) \in \mathcal{S} \times \Asc, \forall h \in[H]
\end{array}
\label{eq: PhV}
\end{equation}
Here the conditional expectation is induced by the transition kernel $\P^k_{h}(\cdot \mid x, a)$. Similarly we have
$$\big|(\P_{h}^j V_{h+1}^{j,\dagger})(x, a)-(\P^j_{h} \widetilde{V}^j_{h+1})(x, a)\big| \leq \epsilon,  j \in [K]/\{k\}.$$
Combining \eqref{eq: PhV} and the definition of the Bellman operator $\B^j_{h}$ in (5), we have
$$
\big|(\B^k_{h} V_{h+1}^{k,\dagger})(x, a)-(\B^k_{h} \widehat{V}^k_{h+1})(x, a)\big| \leq \epsilon, \quad \forall(x, a) \in \mathcal{S} \times \Asc, \forall h \in[H]
$$
and similarly
$$
\big|(\B^j_{h} V_{h+1}^{j,\dagger})(x, a)-(\B^j_{h} \widetilde{V}^j_{h+1})(x, a)\big| \leq \epsilon, \quad \forall(x, a) \in \mathcal{S} \times \Asc, \forall h \in[H], j \in [K]/\{k\}. 
$$
As a result, we have
$$
\big|\epsilon_{h}^{k,\tau}(\widehat{V}^k_{h+1})-\epsilon_{h}^{k,\tau}(V_{h+1}^{k,\dagger})\big| \leq 2 \epsilon,  \forall \tau \in[n_k]; \; \big|\epsilon_{h}^{j,\tau}(\widetilde{V}^k_{h+1})-\epsilon_{h}^{j,\tau}(V_{h+1}^{j,\dagger})\big| \leq 2 \epsilon, \forall j \in [K]/\{k\}, \forall \tau \in[n_j]
$$
for all $h \in[H]$. 

Also, recall the definition of term (iii). By the Cauchy-Schwarz inequality, for any two vectors $a, b \in \mathbb{R}^{d}$ and any positive definite matrix $\Lambda \in \mathbb{R}^{d \times d}$, it holds that $\|a+b\|_{\Lambda}^{2} \leq$ $2 \|a\|_{\Lambda}^{2}+2 \|b\|_{\Lambda}^{2} .$ Hence, for all $h \in[H]$, we have
\begin{equation}
\begin{aligned}
     &  |(\mathrm{iii})|^{2} \leq 2  \underbrace{  \left\|  \begin{bmatrix}
(\Phi_{0,h}^k)\trans \epsilon_{h}^k(V^{k,\dagger}) - \sum_{j \neq k}^{K}  (\Phi_{0,h}^j)\trans \big( \bI_{n_j} - \bP_{\Phi_{1,h}^j}  \big) \epsilon_{h}^j(V^{j,\dagger}) \\
(\Phi_{1,h}^k)\trans \epsilon_{h}^k (V^{k,\dagger})
\end{bmatrix}  \right\|_{(\Sigma_h^{k})^{-1}}^2 }_{ f( \{ V^{j,\dagger}\}_{j \in [K]})}\\
    & + \underbrace{ 2  \left\|  \begin{bmatrix}
(\Phi_{0,h}^k)\trans (\epsilon_{h}^k(V^{k,\dagger}) - \epsilon_{h}^k) - \sum_{j \neq k}^{K}  (\Phi_{0,h}^j)\trans \big( \bI_{n_j} -\bP_{\Phi_{1,h}^j}  \big) (\epsilon_{h}^j(V^{j,\dagger}) -  \epsilon_{h}^j)\\
(\Phi_{1,h}^k)\trans  (\epsilon_{h}^k(V^{k,\dagger}) - \epsilon_{h}^k)
\end{bmatrix}  \right\|_{(\Sigma_h^{k})^{-1}}^2}_{ A }
\end{aligned}
\label{eq: iii}   
\end{equation}
The second term $A$ on the right-hand side of \eqref{eq: iii} is upper bounded by
\begin{equation}
\begin{aligned}
&A \leq  2 \left\| (\Sigma_{h}^k)^{-1/2} \begin{bmatrix}
 (\Phi_{0,h}^1)\trans ( \bI_{n_1} - \bP_{\Phi_{1,h}^j} ) & \ldots & (\Phi_{0,h}^k)\trans & \ldots \\
 \mathbf{0} & \ldots & (\Phi_{1,h}^k)\trans & \ldots 
 \end{bmatrix} \right\|_{\rm op}^{2} \sum_{j=1}^{K} \left\|\epsilon_{h}^k(V^{j,\dagger}) - \epsilon_{h}^j\right\|^{2} \\
 & \leq 8N\epsilon^2.
\end{aligned}    
\label{eq: iii 2}
\end{equation}
We then bound $f( \{ V^{j,\dagger}\}_{j \in [K]})$ via uniform concentration inequalities.  Applying Lemma \ref{lemma: Concentration} and the union bound, for any fixed $h \in [H]$, we have 
$$
\begin{aligned}
 & \P_{\Dsc} \bigg(  \sup _{V^j \in \mathcal{N}_{h+1}(\epsilon), j \in [K] } f( \{ V^{j}\}_{j \in [K]}) \leq H^2  \Bigl ( 2 \log \bigl (1/\delta \bigr )   + d \log (1 + N / \lambda )  \Bigr ) \bigg)\\ 
 & \leq   \delta  | \mathcal{N}_{h+1}(\epsilon ) |.
\end{aligned}
$$
For all $\xi \in(0, 1)$ and all $\epsilon > 0$, we set  $\delta =\xi/(H |\mathcal{N}_{h+1}(\epsilon)| )$.  
Hence, for any fixed $h \in [H]$, it holds that
\begin{equation}
\begin{aligned}
& \sup _{V^j \in \mathcal{N}_{h+1}(\epsilon), j \in [K] } f( \{ V^{j}\}_{j \in [K]})  \\
& \leq H^2  \Bigl ( 2 \log \bigl (H   | \mathcal{N}_{h+1} (\epsilon ) | / \xi  \bigr )   + d  \log (1 + N / \lambda )  \Bigr )    
\end{aligned}
   \label{eq:bound_term3_4}
\end{equation}
with probability at least $1 - \xi / H$, which is taken with respect to $\P_{\Dsc}$. Using the union bound again, we have \eqref{eq:bound_term3_4} holds for all $h \in [H]$ with probability at least $1 - \xi$. 

Combining \eqref{eq: iii 2} and \eqref{eq:bound_term3_4}, we have 

$$ |(\mathrm{iii})|^{2} \leq   2 H^2 \Bigl ( 2  \log \bigl (H   | \mathcal{N}_{h+1} (\epsilon ) | / \xi  \bigr )   + d  \log (1 + N / \lambda )  \Bigr )  + 8  N \epsilon^2.$$
Recall that
$$
\widehat{V}^k_{h+1}, \widetilde{V}^j_{h+1} \in \mathcal{V}_{h+1}\left(R_{0}, B_{0}, \lambda\right),  j \in [K]/\{k\},
$$
where $R_{0}=H \sqrt{N / \lambda},  B_{0}=2 \alpha, \lambda=1, \alpha = c  d H \sqrt{\zeta}$. Here $c>0$ is an absolute constant, $\xi \in(0,1)$ is the confidence parameter, and $\zeta=\log (2 d H N / \xi)$ is specified in Algorithm 2. Recall that $\mathcal{N}_{h+1}(\epsilon)=\mathcal{N}_{h+1}\left(\epsilon ; R_{0}, B_{0}, \lambda\right)$ is the minimal $\epsilon$-cover of $\mathcal{V}_{h+1}=\mathcal{V}_{h+1}\left(R_{0}, B_{0}, \lambda\right)$ with respect to the supremum norm. Applying Lemma \ref{lemma: covering number} with $\epsilon = H \sqrt{d }/\sqrt N$, $\alpha = c d H \sqrt{\zeta}$, and $\lambda = 1$,
we have
$$
\begin{aligned}
\log \left|\mathcal{N}_{h}(\epsilon ; R_0, B_0, \lambda)\right| & \leq d  \log (1+4 R_0 / \epsilon)+d^{2}  \log \left(1+8 d^{1 / 2} B_0^{2} /\left(\epsilon^{2}
\lambda\right)\right) \\
& = d  \log (1+4 H \sqrt{N / \lambda}/ \epsilon)+d^{2}  \log \left(1+8 d^{1 / 2} 4 \beta^{2} /\left(\epsilon^{2}
\lambda\right)\right) \\
 & \leq d  \log (1+4 N/ \sqrt{\lambda d} )+d^{2}  \log \left(1+ 32 d^{1 / 2}  \beta^{2} N /\left(d H^2
\lambda\right)\right) \\
& =  d  \log (1+4 N/ \sqrt{\lambda d} )+d^{2}  \log (1+ 32 d^{3 / 2}  c^2 \zeta N /\lambda) \\
& = d  \log (1+4 N/ \sqrt{d} )+d^{2}  \log (1+ 32 d^{3 / 2}  c^2 \zeta N) \\
& \leq 2 d^{2}  \log (1+ 32 d^{3 / 2}  c^2 \zeta N) \leq 2 d^{2}  \log (64 c^2  d^{3 / 2}  N \zeta ),
\end{aligned}
$$
which implies that
$$
\begin{aligned}
 &  2 H^2  \Bigl ( 2  \log \bigl (H   | \mathcal{N}_{h+1} (\epsilon ) | / \xi  \bigr )   + d  \log (1 + N / \lambda )  \Bigr )  + 8  N \epsilon^2 \\
 & \leq 2 H^2 \Bigl ( 2  \log (H / \xi ) +  4 d^{2}  \log (64 c^2  d^{3 / 2}  N \zeta ) + d  \log (1 + N)\Bigr )  + 8  H^2 d \\ & \leq 2 H^2 \Bigl (2 \zeta  + 4 d^{2}  \log (64 c^2) + 8 d^{2}\zeta +  d \zeta + 4  d \Bigr) \\
 & \leq  d^2  H^2 \zeta \Bigl( 18 (1+1/d) +  8 \log (64 c^2) \Bigr) \leq  d^2  H^2 c^2 \zeta /4
\end{aligned}
$$
when $ c \geq 1$ sufficiently large. We then have $|(\mathrm{ii})| \leq \alpha/2 \sqrt{\phi(x, a)\trans (\Sigma_h^{k})^{-1} \phi(x, a)}$. In addition, $ H \sqrt{d \lambda}  \leq  \alpha/2$, we have 
$$
|\big(\B^k_{h} \widehat{V}^k_{h+1}\big)(x, a)-\big(\widehat{\B}^k_{h} \widehat{V}^k_{h+1}\big)(x, a)|  \leq \alpha  \sqrt{\phi(x, a)\trans (\Sigma_h^{k})^{-1} \phi(x, a)} = \Gamma_{h}^k(x, a),
$$
which finishes the proof. 

\section{Proof of Theorem 3}

At first, we can show that the explicit expression for  $\widehat \theta_{h,k}^0$ and $\widehat \theta_{h,k}^k$ is 
    \begin{equation}
    \begin{aligned}
       \big( (\widehat \theta_{h,k}^0)\trans, (\widehat \theta_{h,k}^k)\trans \big)\trans =  (\Sigma_{h}^k)^{-1} \Big( & \sum\nolimits_{j\neq k}^K   \big( Y_h^j(\widetilde V_{h+1}^j) \big)\trans \big( \bI_{n_j} - \bP_{\Phi_{1,h}^j} \big)\Phi_{0,h}^j +   \big(Y_h^k(\widehat V_{h+1}^k)\big) \trans \Phi_{0,h}^k , \\ & \big(Y_h^k(\widehat V_{h+1}^k)\big) \trans \Phi_{1,h}^k \Big)\trans \,.
   \end{aligned} 
   \label{mini}
    \end{equation}
The minimizer (10) satisfies the following conditions
    \begin{align*}
 &   \frac{\partial f_h}{\partial \theta^j} = -2 (\Phi_{1,h}^j)\trans  \big( Y_h^j(\widehat V_{h+1}^j) - \Phi_{0,h}^j \theta^0 - \Phi_{1,h}^j \theta^j \big ) = \bzero_{d_1}, j \neq k,\\
&     \frac{\partial f_h}{\partial \theta^k} = -2 (\Phi_{1,h}^k)\trans  \big( Y_h^k(\widehat V_{h+1}^k) - \Phi_{0,h}^k \theta^0 - \Phi_{1,h}^k \theta^k \big ) + 2 \lambda \theta^k = \bzero_{d_1} ,\\
& \frac{\partial f_h}{\partial \theta^0} = -2 \sum_{j=1}^K  (\Phi_{0,h}^j)\trans  \big( Y_h^j(\widehat V_{h+1}^j) - \Phi_{0,h}^j \theta^0 - \Phi_{1,h}^j \theta^j \big ) + 2 \lambda \theta^0 = \bzero_{d_0}.
\end{align*}
We then have 
\begin{align*}
   \widehat \theta^j_h & = 
\big( (\Phi_{1,h}^j)\trans   \Phi_{1,h}^j \big)^{+} (\Phi_{1,h}^j)\trans \big( Y_h^j(\widetilde V_{h+1}^j) - \Phi_{0,h}^j \widehat \theta^0_h \big )  \\
  \widehat \theta^k_h & = \big( (\Phi_{1,h}^k)\trans   \Phi_{1,h}^k + \lambda \bI_{d_1} \big)^{-} (\Phi_{1,h}^k)\trans  \big( Y_h^k(\widehat V_{h+1}^k) - \Phi_{0,h}^k \widehat \theta^0_h \big ) \\
 \widehat \theta^0_h & = \big( \sum_{j=1}^K( \Phi_{0,h}^j)\trans   \Phi_{0,h}^j + \lambda \bI_{d_0} \big)^{-}   \Big( \sum_{j\neq k}  (\Phi_{0,h}^j)\trans  \big( Y_h^j(\widetilde V_{h+1}^j) - \Phi_{1,h}^j   \widehat \theta^j_h \big ) + (\Phi_{0,h}^j)\trans  \big( Y_h^k(\widehat V_{h+1}^k) - \Phi_{1,h}^k  \widehat \theta^k_h \big ) \Big),\end{align*}
which further implies that 
\begin{align*}
\big( (\Phi_{1,h}^k)\trans   \Phi_{1,h}^k + \lambda \bI_{d_1} \big)   \widehat \theta^k_h &  = (\Phi_{1,h}^k)\trans Y_h^k(\widehat V_{h+1}^k)   - (\Phi_{1,h}^k)\trans  \Phi_{0,h}^k \widehat \theta^0_h  \\
\big( \sum_{j=1}^K( \Phi_{0,h}^j)\trans   \Phi_{0,h}^j + \lambda \bI_{d_0} \big)  \widehat \theta^0_h =  &   \sum_{j\neq k}  (\Phi_{0,h}^j)\trans  \Big( Y_h^j(\widetilde V_{h+1}^j) - \bP_{\Phi_{1,h}^j} \big( Y_h^j(\widetilde V_{h+1}^j) - \Phi_{0,h}^j \theta^0_h \big ) \Big )     \\ & + (\Phi_{0,h}^k)\trans Y_h^k(\widehat V_{h+1}^k) - (\Phi_{0,h}^k)\trans \Phi_{1,h}^k \widehat \theta^k_h.
\end{align*}
As a result, 
\begin{align*}
    & \Big( \sum_{j \neq k}   ( \Phi_{0,h}^j)\trans ( \bI_{n_j} - \bP_{\Phi_{1,h}^j} )\Phi_{0,h}^j + (\Phi_{0,h}^k)\trans \Phi_{0,h}^k + \lambda \bI_{d_0} \Big)  \widehat \theta^0_h + (\Phi_{0,h}^k)\trans \Phi_{1,h}^k \widehat \theta^k_h \\
    & =  \sum_{j\neq k}  (\Phi_{0,h}^j)\trans (\bI_{n_j} - \bP_{\Phi_{1,h}^j}) Y_h^j(\widetilde V_{h+1}^j) + (\Phi_{0,h}^k)\trans Y_h^k(\widehat V_{h+1}^k) \\
 & (\Phi_{1,h}^k)\trans  \Phi_{0,h}^k \widehat \theta^0_h + \big( (\Phi_{1,h}^k)\trans   \Phi_{1,h}^k + \lambda \bI_{d_1} \big)   \widehat \theta^k_h  = (\Phi_{1,h}^k)\trans Y_h^k(\widehat V_{h+1}^k)  .
\end{align*}
We then get \eqref{mini} by rewriting the two equations into the matrix formula. 
By the proof of Theorem 2, we have 
$$
\begin{aligned}
& \big( (\widehat{\theta}_{h,k}^0)\trans,(\widehat{\theta}_{h,k}^k)\trans \big)\trans - \big( ({\theta}_{h}^0)\trans,({\theta}_{h}^k)\trans \big)\trans  = \lambda (\Sigma_h^{k})^{-1}  \big( (\theta_h^0)\trans, (\theta_h^k)\trans \big)\trans \\
& +  (\Sigma_h^{k})^{-1} \begin{bmatrix}
(\Phi_{0,h}^k)\trans \epsilon_{h}^k - \sum_{j \neq k}^{K}  (\Phi_{0,h}^j)\trans ( \bI_{n_j} - \bP_{\Phi_{1,h}^j} ) \epsilon_{h}^j \\
(\Phi_{1,h}^k)\trans \epsilon_{h}^k 
\end{bmatrix}.     
\end{aligned}
$$
Following the proof of  Theorem 2, we can easily get
$$\bigl \| \big( (\widehat{\theta}_{h,k}^0)\trans,(\widehat{\theta}_{h,k}^k)\trans \big)\trans - \big( ({\theta}_{h}^0)\trans,({\theta}_{h}^k)\trans \big)\trans  \bigr \| \leq \alpha \big\|(\Sigma_h^{k})^{-1}\big\|^{1/2}_{\rm op},$$
which implies that $ \| \widehat{\theta}_{h,k}^0 - \theta_{h}^0\| \leq  \alpha \big\|(\Sigma_h^{k})^{-1}\big\|^{1/2}_{\rm op}$. 

\section{Proof of Corollary 1}

Following from Theorem 2, it holds that
\begin{align}
    \label{eq:co0}
    \operatorname{SubOpt}^k\big(\widehat \pi^k ;x \big) &\leq 2 \alpha \sum_{h=1}^H\E_{k,\pi^{k,*}}\Bigl[ \sqrt{ \phi(x_h,a_h)\trans ( \Sigma_h^k)^{-1}\phi(x_h,a_h) } \mid x_1=x\Bigr] .
\end{align}
Thus, it suffices to upper bound the right-hand side of \eqref{eq:co0}.
By Lemma \ref{lem:concen}, when $n_k$ is sufficiently large, it holds for any $k \in [K], h \in [H]$ that
\begin{align*}
    \Big\| n_k^{-1} \Lambda_h^k - \E_{k} \bigl[ \phi(x_h, a_h) \phi(x_h, a_h)\trans   \bigr] \Big\| \le b/ 2
\end{align*}
with probability at least $ 1 - \xi$. It then holds for any $k \in [K], h \in [H]$ that
\begin{align}\label{eq:lam_min}
     \lambda_{\min}(\Lambda_h^k) \ge
     b n_k / 2
\end{align}
with probability at least $ 1 - \xi$. In what follows, we conditioned on the event defined in \eqref{eq:lam_min}. By the definition of $\Lambda_h^j$ in (7), we notice that the matrix $(\Phi_{0,h}^j)\trans \big( \bI_{n_j} - \bP_{\Phi_{1,h}^j} \big)\Phi_{0,h}^j$ is the Schur complement of $(\Phi_{0,h}^j)\trans \Phi_{0,h}^j$ in $\Lambda_h^j$. Thus, following from \eqref{eq:lam_min}, we have that
\begin{align*}
     \lambda_{\min}\Bigl( (\Phi_{0,h}^j)\trans \big( \bI_{n_j} - \bP_{\Phi_{1,h}^j} \big)\Phi_{0,h}^j\Bigr) \ge n_j   b/ 2.
\end{align*}
Thus, by the definition of $\Sigma_h^k$ in (11), we have that
\begin{align*}
    \Sigma_h^{k}  &= \Lambda_h^{k} + \lambda \bI_d +     \begin{bmatrix}
 \sum_{j \neq k}^{K}  ( \Phi_{0,h}^j)\trans \big( \bI_{n_j} - \bP_{\Phi_{1,h}^j} \big)\Phi_{0,h}^j  & \mathbf{0}_{d_0 \times d_1}\\
\mathbf{0}_{d_1 \times d_0}  & \mathbf{0}_{d_1 \times d_1}
\end{bmatrix} \nonumber \\
& \ge \begin{bmatrix}
 (\lambda + N  b / 2) \bI_{d_0}  & \\
  & ( \lambda + n_k  b / 2) \bI_{d_1}
\end{bmatrix},
\end{align*}
which further implies that 
\begin{align*}
    \label{eq:co1}
    (\Sigma_h^{k})^{-1} 
& \le \begin{bmatrix}
 (\lambda + N  b / 2)^{-1} \bI_{d_0}  & \\
  & ( \lambda + n_k  b / 2)^{-1} \bI_{d_1}
\end{bmatrix},
\end{align*}
which further implies that
\begin{align*}
    \phi(x_h,a_h)\trans ( \Sigma_h^k)^{-1}\phi(x_h,a_h) &\le \big\| \phi_0(x_h, a_h) \big\|^2 / (\lambda + N b / 2) + \big\| \phi_1(x_h, a_h) \big\|^2 / (\lambda + n_k  b / 2) \nonumber \\
    &\le  (\lambda + N  b / 2)^{-1} d_0/d +  (\lambda + n_k b / 2)^{-1} d_1/d \,.
\end{align*}
Combining with \eqref{eq:co0}, we have with probability at least $ 1- 2 \xi$ that
\begin{align*}
    \operatorname{SubOpt}^k\big(\widehat \pi^k ;x \big) &\leq 2 \alpha \sum_{h=1}^H\E_{k,\pi^{k,*}}\Bigl[ \bigl( \phi(x_h,a_h)\trans ( \Sigma_h^k)^{-1}\phi(x_h,a_h) \bigr)^{1/2} \mid x_1=x\Bigr] \nonumber\\
    & \le 2\alpha H \sqrt{ (\lambda + N  b / 2)^{-1} d_0/d +  (\lambda + n_k b / 2)^{-1} d_1/d },
\end{align*}
which proves the first part of Corollary 1. 

We then show the upper bound of $\|\widehat \theta_{h,k}^0 - \theta_h^0 \|$. Define $\bA = (\Phi_{0,h}^k)\trans \Phi_{0,h}^k + \sum_{j \neq k}^{K}  (\Phi_{0,h}^j)\trans \big( \bI_{n_j} - \bP_{\Phi_{1,h}^j} \big) \Phi_{0,h}^j + \lambda \bI_{d_0}$, $\bB = (\Phi_{0,h}^k)\trans \Phi_{1,h}^k$ and $\bD = (\Phi_{1,h}^k)\trans \Phi_{1,h}^k + \lambda \bI_{d_1}$. By the inverse of the partitioned matrix, we then have 
$$(\Sigma_h^{k})^{-1} = \begin{bmatrix}
\bA^{-1} - \bA^{-1} \bB( \bD - \bB\trans \bA^{-1} \bB)^{-1} \bB\trans \bA^{-1} &  -  \bA^{-1} \bB( \bD - \bB\trans \bA^{-1} \bB)^{-1} \\
- (\bD - \bB\trans \bA^{-1} \bB)^{-1} \bB\trans \bA^{-1} &  (\bD - \bB\trans \bA^{-1} \bB)^{-1}
\end{bmatrix}  \,.
$$
Let $\bC = \bD - \bB\trans \bA^{-1} \bB$. We then have 
$$
\lambda_{\min}( \bC ) \geq \lambda_{\min}(\bD) - \|\bB\|^2\|\bA^{-1}\| \geq \frac{b n_k}{2} - \frac{ n_k^2 (1+b/2)^2 }{\lambda+  Nb/2 } \geq \frac{b n_k}{4}
$$
for $K$ sufficiently large. By the proof of Theorem 3, we have 
$$
\begin{aligned}
 \widehat \theta_{h,k}^0 - \theta_h^0 =  & \alpha \Big(  \big( \bA^{-1} - \bA^{-1} \bB \bC^{-1} \bB\trans \bA^{-1} \big) \theta_h^0 - \bA^{-1} \bB \bC^{-1} \theta_h^k \Big) \\
& + \big( \bA^{-1/2} - \bA^{-1} \bB \bC^{-1} \bB\trans \bA^{-1/2} \big) \bA^{-1/2} \big( (\Phi_{0,h}^k)\trans \epsilon_{h}^k - \sum_{j \neq k}^{K}  (\Phi_{0,h}^j)\trans \big( \bI_{n_j} - \bP_{\Phi_{1,h}^j} \big) \epsilon_{h}^j\big) \\
& + \bA^{-1} \bB \bC^{-1} \bD^{1/2} \bD^{-1/2} (\Phi_{1,h}^k)\trans \epsilon_{h}^k \,.
\end{aligned}
$$
By following the proof of Theorem 2, it is not hard to show that 
$$
\begin{aligned}
 \| \widehat \theta_{h,k}^0 - \theta_h^0\| \leq  & \frac{\alpha}{\lambda +  Nb/2} 
 \Big( \big( 1   +   \frac{4  n_k^2(1+b/2)^2 }{b n_k(\lambda +  Nb/2)}   \big)  \sqrt{d_0} + \frac{4  n_k(1+b/2) }{b n_k} \sqrt{d_1} \Big) \\
& + \frac{\alpha}{3 \sqrt{\lambda +  Nb/2}} \Big ( 1+ \frac{4 n_k^2(1+b/2)^2 }{b n_k ( \lambda +  Nb/2 ) } \Big)  + \frac{\alpha}{3 \sqrt{\lambda +  Nb/2}} \frac{4 n_k (1+b/2) \sqrt{\lambda + b n_k /2 } }{ b n_k \sqrt{ \lambda +  Nb/2 } } \Big) \\
& \leq \frac{\alpha}{\sqrt{\lambda + N b /2}} \leq \frac{\sqrt{2} \alpha}{\sqrt{N b}} 
\end{aligned}
$$
when $N$ is sufficiently large. Finally,
$$\|\widehat \theta_{h}^0 - \theta_h^0 \|  \leq
\frac{1}{K} \sum_{k=1}^K \| \widehat \theta_{h,k}^0 - \theta_h^0 \|
\leq \max_{k \in [K]}  \| \widehat \theta_{h,k}^0 - \theta_h^0 \| \leq  \frac{\sqrt{2} \alpha}{\sqrt{N b}}\,,$$
which finishes the proof.

\section{Proof of Corollary 2} 
We have that
\begin{align*}
    \lambda_{\min} \bigl( \sum_{j \neq k} (\Csc_{00, h}^j - \Csc_{01, h}^j (\Csc_{11, h}^j)^{-1} \Csc_{10, h}^j) \bigr) &\ge \lambda_{\min} ( \sum_{j \neq k} \Csc_{00, h}^j) - \sum_{j \neq k}\lambda_{\max}(\Csc_{01, h}^j (\Csc_{11, h}^j)^{-1} \Csc_{10, h}^j) \\
    & \ge (K -1) b / 2  .
\end{align*}
Then, similar to the proof of Corollary 1, we complete the proof of Corollary 2.

\section{Proof of Theorem 4} 
Here we define another basis functions
$$\phi_{2}(x_{1 (h+1)}) = \big( \psi_{21}(x_{1(h+1)}), \ldots, \psi_{2 d_2}(x_{1(h+1)}) \big) \trans \in \R^{d_2}$$
with the number of bases $d_2$ and $\phi_{3}(x_h,a_h,x_{1(h+1)}) = {\rm Vec}\big( \phi_{1}(x_h,a_h) \phi_{2}(x_{1(h+1)})\trans \big) \in \R^{d_3}$, where $d_3 = d_1 d_2$ and the function ${\rm Vec}(\cdot)$ vectorizes a matrix by concatenating its rows. By Theorem 5.2 (iv) of \cite{schultz1969multivariate}, 
there exists  $\theta_h^0 \in \R^{d_0}$, $\theta_h^k \in \R^{d_1}$ and $\bmu_h^k \in \R^{d_1 \times d_2}$ 
such that	
	\begin{equation}
\big| \P^k_{h}(x_{1 (h+1)} \mid x_{h}, a_h) - \big\langle \phi_3(x_h, a_h, x_{1(h+1)}), {\rm Vec}(\bmu_{h}^k) \big\rangle \big| \leq \eta,
\label{S8}
\end{equation}
\begin{equation}
   \big | \E \big[ r^k_{h}(x_{h}, a_{h}) \mid x_{h}, a_{h} \big] - \big\langle \phi_0(x_{h}, a_h), \theta_{h}^0\big\rangle - \big\langle\phi_1(x_h, a_h), \theta_{h}^k\big\rangle \big| \leq \eta \,,
\end{equation}
$\forall ( x_{0 (h+1)}\trans, x_{1 (h+1)}\trans )\trans \in \Ssc, x_{h} \in \Ssc, a_h \in \Asc$, 
where $\eta$ satisfying $\eta \leq C_q m  d^{-(q+1)/m}$ for some constant $C_q$ dependent on $q$. By the boundedness of $\Ssc$, we have  
\begin{equation}
\big\| \P^k_{h}( \cdot \mid x_{h}, a_h) - \langle \phi_3(x_h, a_h, \cdot ), {\rm Vec}(\bmu_{h}^k) \rangle \big\|_{\rm TV} \leq C \eta, \; \forall (x_h,a_h) \in \Ssc \times \Asc 
\label{t2}
\end{equation}
for some constant $C$ dependent on the measure of $\Ssc$, where $\| \cdot \|_{\rm TV}$ is the total variation distance. Since we only consider the rate of $\eta$, for simplicity, we assume $C = 1$ (otherwise, $C$ can be absorbed into $C_q$). 

By Lemma 3.1 of \cite{jin2020pessimism}, we have 
\begin{align*}
\operatorname{SubOpt}^k\big(\widehat \pi^k ;x \big) = & \underbrace{-\sum_{h=1}^H \mathbb{E}_{k, \widehat{\pi}^k }\left[\iota_h^k (x_h, a_h) \mid x_1= x\right]}_{\text {(A): Spurious Correlation }}+ \underbrace{ \sum_{h=1}^H \mathbb{E}_{k,\pi^{k,*}}\left[\iota_h^k(x_h, a_h) \mid x_1=x\right]}_{( \text {B}): \text { Intrinsic Uncertainty }} \\
& +\underbrace{\sum_{h=1}^H \mathbb{E}_{k,\pi^{k,*}}\left[ \langle \widehat{Q}_h^k(x_h, \cdot), \pi_h^{k,*}(\cdot \mid x_h) - \widehat{\pi}_h^k (\cdot \mid x_h) \rangle_{\mathcal{A}} \mid x_1=x \right]}_{\text {(C): Optimization Error }}\,,
\end{align*}
where $\iota_h^k (x, a) = (\B_h^k \widehat V_{h+1}^k)(x,a) - \widehat{Q}_h^k(x, a)$. By the definition of $\widehat \pi_h^k$, we have $\text{(C)} \leq 0$. We will show 
\begin{equation}
\begin{aligned}
 0  \leq \iota_h^k (x, a) \leq 2 \overline \Gamma_h^k(x,a) 
 \label{iota}
\end{aligned}
\end{equation}
with probability at least $1 - \xi$, 
which yields 
\begin{align*}
\operatorname{SubOpt}^k\big(\widehat \pi^k ;x \big) & \leq  2 \sum_{h=1}^H \mathbb{E}_{k,\pi^{k,*}}\big[ \overline \Gamma_h^k(x_h,a_h) \mid x_1=x\big] 	\,.
\end{align*}
Before proving \eqref{iota}, we bound $\big|\big(\B_{h}^k \widehat{V}^k_{h+1} \big)(x, a)- \big(\widehat \B_{h}^k \widehat{V}^k_{h+1} \big)(x, a)\big|$ first. By definition, for $k \in [K]$,  we have 
\begin{equation*}
\begin{aligned}
\big(\B_{h}^k \widehat{V}^k_{h+1} \big)(x, a) =& \E_k\big[r^k_{h}\big(x_{h}, a_{h}\big) + \widehat{V}^k_{h+1}(x_{h+1}) \mid x_{h}=x, a_{h}=a\big] \\
= & \E_k \big[r_{h}^k(x_{h}, a_{h}) \mid x_{h}=x, a_{h}=a\big] + \int_{x^{\prime} \in \Ssc} \widehat{V}^k_{h+1}(x^{\prime})  \P^k_{h}(x^{\prime} \mid x_h = x, a_h = a) {\rm d} x^{\prime}  \\
= &  \langle\phi_0(x, a), \theta_{h}^0 \rangle + \langle\phi_1(x, a), \bar \theta_{h}^k \rangle + e_h^k(x,a) \,,
\end{aligned}   
\end{equation*}
where $\bar \theta_h^k = \theta_h^k +  \int_{ x_{1(h+1)} }  \bmu_h^k  \phi_{2}(x_{1(h+1)}) \widehat{V}^k_{h+1}( (x_{0(h)}\trans ,x_{1(h+1)}\trans )\trans ) \mathrm{d} x_{1(h+1)}$ and 
$e_h^k(x,a)$ satisfies
\begin{equation}
\label{bound e}
\begin{aligned}
|e_h^k(x,a)| = &  \big| (\B_{h}^k \widehat{V}^k_{h+1} )(x, a)  - \langle\phi_0(x, a), \theta_{h}^0 \rangle - \langle\phi_1(x, a),  \bar \theta_{h}^k \rangle \big| \\
\leq &  \big|  \E_k \big[r_{h}^k(x_{h}, a_{h}) \mid x_{h}=x, a_{h}=a\big] - \langle\phi_0(x, a), \theta_{h}^0 \rangle - \langle\phi_1(x, a),  \theta_{h}^k \rangle \big|  \\
& + \Big| \int_{x^{\prime} \in \Ssc} \widehat{V}^k_{h+1}(x^{\prime}) \P^k_{h}(x^{\prime} \mid x_h=x, a_h=a) {\rm d} x^{\prime}  -   \langle\phi_1(x, a),  \bar \theta_{h}^k - \theta_{h}^k \rangle \big| \Big| \\
\leq &  \eta + \max_{x^{\prime} \in \Ssc} |\widehat{V}^k_{h+1}(x^{\prime})| \big\| \P^k_{h}( \cdot \mid x_h=x, a_h=a) - \langle \phi_3(x_h, a_h, \cdot ), {\rm Vec}(\bmu_{h}^k) \rangle  \big\|_{\rm TV} \\
\leq &  (H+1) \eta \,,
\end{aligned}
\end{equation}
by \eqref{S8} and \eqref{t2}, and the fact that $|\widehat{V}^k_{h+1}(x)| \leq H$. As a result, we know that there exists $\theta_h = \big( (\theta_h^0)\trans, (\bar \theta_h^1)\trans,\ldots, (\bar \theta_h^K)\trans \big)\trans \in \R^{d_K}$ such that $\big( \B^k_{h} \widehat{V}^k_{h+1} \big) (x,a) = \phi^k(x, a) \trans  \theta_h + e_h^k(x,a)$ and $\big( \B^j_{h} \widetilde{V}^j_{h+1} \big) = \phi^j(x, a) \trans  \theta_h + e_h^j (x,a)$, for $j \in [K]/\{k\}$. Now we  bound the difference between $\B^k_{h} \widehat{V}^k_{h+1}$ and $\widehat{\B}^k_{h} \widehat{V}^k_{h+1}$. Define $Y_h \in \R^N$ the same as Section S2. For all $h \in[H]$ and all $(x, a) \in \mathcal{S} \times \Asc$, we have
$$
\begin{aligned}
& (\B^k_{h} \widehat{V}^k_{h+1})(x, a)-\big(\widehat{\B}^k_{h} \widehat{V}^k_{h+1})(x, a) = \phi^k(x, a) \trans ( \theta_h - \widehat \theta_{h,k}) + e_h^k(x,a)\\
= &  \phi^k(x, a) \trans  \theta_h  -  \phi^k(x, a) \trans(\Lambda_h  +\bH_{k,\lambda})^{+} \Phi_h\trans  Y_h + e_h^k(x,a)\\
= & \underbrace{\phi^k(x, a)\trans  \theta_h - \phi^k(x, a)\trans  (\Lambda_h  +\bH_{k,\lambda})^{+} \Phi_h\trans  \Phi_h   \theta_h}_{(\mathrm{iv})}\\
 & -\underbrace{\phi^k(x, a)\trans (\Lambda_h  +\bH_{k,\lambda})^{+}  \Phi_h\trans ( Y_h - \E Y_h )}_{(\mathrm{v})}  -\underbrace{\phi^k(x, a)\trans (\Lambda_h  + \bH_{k,\lambda})^{+}  \Phi_h\trans (   \E Y_h - \Phi_h \theta_h)}_{(\mathrm{vi})} + e_h^k(x,a).
\end{aligned}
$$
Thus, by triangle inequality, we have
$$
\big|(\B_{h}^k \widehat{V}^k_{h+1})(x, a)-(\widehat{\B}_{h}^k \widehat{V}^k_{h+1})(x, a)\big| \leq |(\mathrm{iv})|+|(\mathrm{v})| +|(\mathrm{vi})| + |e_h^k(x,a)|.
$$	
Similar to the proof of Theorem 2, we can show $ |(\mathrm{iv})| \leq H \sqrt{ d / \lambda}  \sqrt{\phi(x, a)\trans(\Sigma_h^{k})^{-1} \phi(x, a)} \leq \alpha/2  \sqrt{\phi(x, a)\trans(\Sigma_h^{k})^{-1} \phi(x, a)}$ and $|(\mathrm{v})| \leq \alpha/2  \sqrt{\phi(x, a)\trans(\Sigma_h^{k})^{-1} \phi(x, a)}$ with probability at least $1 - \xi$. Denote $u_h^k = \E Y_h^k(\widehat V_{h+1} ^k) - \Phi_h^k \theta_h$ and  $u_h^j = \E Y_h^j(\widetilde V_{h+1} ^j) - \Phi_h^j \theta_h$ for $j \in [K] / \{k\}$. 
We then bound $|(\mathrm{vi})|$ as follows: 
$$
\begin{aligned}
& \phi^k(x, a)\trans (\Lambda_h  + \bH_{k,\lambda})^{+}  \Phi_h\trans (   \E Y_h - \Phi_h \theta_h) \\
&   = \left | \phi(x, a)\trans  (\Sigma_h^{k})^{-1} \begin{bmatrix}
(\Phi_{0,h}^k)\trans u_h^k  - \sum_{j \neq k}^{K}  (\Phi_{0,h}^j)\trans ( \bI_{n_j} - \bP_{\Phi_{1,h}^j} ) u_h^j\\
(\Phi_{1,h}^k)\trans u_h^k
\end{bmatrix} \right | \\
& \leq \sqrt{\sum_{j=1}^K\|u_h^j\|_2^2
} \sqrt{ \phi(x, a)\trans  (\Sigma_h^{k})^{-1}  \phi(x, a)} \leq 
\sqrt{N  } (H +1) \eta \sqrt{ \phi(x, a)\trans  (\Sigma_h^{k})^{-1}  \phi(x, a)}  \end{aligned} \,.
$$
Combining with \eqref{bound e}, we have
\begin{equation}
\begin{aligned}
& \big|(\B_{h}^k \widehat{V}^k_{h+1})(x, a)-(\widehat{\B}_{h}^k \widehat{V}^k_{h+1})(x, a)\big| \\
& \leq \big( \alpha + \sqrt{N}(H+1) \eta\big)  \sqrt{\phi(x, a)\trans  (\Sigma_h^{k})^{-1}  \phi(x, a)}	+ (H +1) \eta = \overline \Gamma_{h}^k(x, a) .
\end{aligned}
\label{eq:diff}
\end{equation}
We now prove \eqref{iota}. Recall the construction of $\widehat Q_h^k$ in Line $6$ of Algorithm 2. For all $h \in [H]$ and all $(x, a) \in \Ssc \times \Asc$, if $(\widehat{\B}_{h}^k \widehat{V}^k_{h+1})(x, a) - \overline \Gamma_{h}^k(x, a) < 0$, we have 
$$\widehat{Q}^k_{h}(x, a) = \min \big\{ (\widehat{\B}_{h}^k \widehat{V}^k_{h+1})(x, a) - \overline \Gamma_{h}^k(x, a), H-h+1\big\}^{+} = 0$$
and $\iota_h^k (x, a) = (\B_h^k \widehat V_{h+1}^k)(x,a) - \widehat{Q}_h^k(x, a) =  (\B_h^k \widehat V_{h+1}^k)(x,a)  \geq 0$
due to the non-negativity of $r_h^k$ and $\widehat V_h^k$. Otherwise, we have $\widehat{Q}^k_{h}(x, a) \leq (\widehat{\B}_{h}^k \widehat{V}^k_{h+1})(x, a)- \widehat \Gamma_{h}^k(x, a)$, which yields
\begin{align*}
\iota_h^k (x, a) = & (\B_h^k \widehat V_{h+1}^k)(x,a) - \widehat{Q}_h^k(x, a) \geq (\B_h^k \widehat V_{h+1}^k)(x,a) - (\widehat{\B}_{h}^k \widehat{V}^k_{h+1})(x, a)+ \widehat \Gamma_{h}^k(x, a) \\
= & (\B^k_{h} \widehat{V}^k_{h+1})(x, a)-\big(\widehat{\B}^k_{h} \widehat{V}^k_{h+1})(x, a) +  \overline \Gamma_{h}^k(x, a) \geq 0 \,,\\
\end{align*}
where the last inequality comes from \eqref{eq:diff}. We then show the upper bound of $\iota_h^k (x, a)$. By \eqref{eq:diff} and the fact $r_h^k \in [0,1]$ and $\widehat V_h^k \in [0,H-h]$, we have 
\begin{align*}
(\widehat{\B}_{h}^k \widehat{V}^k_{h+1})(x, a)- \overline \Gamma_{h}^k(x, a) &  \leq \big({\B}^k_{h} \widehat{V}^k_{h+1})(x, a)  \leq H-h+1 \,,
\end{align*}
which implies 
\begin{align*}
\widehat{Q}^k_{h}(x, a) & = \min \big\{ (\widehat{\B}_{h}^k \widehat{V}^k_{h+1})(x, a) - \widehat \Gamma_{h}^k(x, a), H-h+1\big\}^{+}  \geq (\widehat{\B}_{h}^k \widehat{V}^k_{h+1})(x, a) - \overline \Gamma_{h}^k(x, a) 
\end{align*}
and 
\begin{align*}
\iota_h^k (x, a) & = (\B_h^k \widehat V_{h+1}^k)(x,a) - \widehat{Q}_h^k(x, a)   \leq (\B_h^k \widehat V_{h+1}^k)(x,a) -  (\widehat{\B}_{h}^k \widehat{V}^k_{h+1})(x, a) + \overline \Gamma_{h}^k(x, a)   \leq 2 \overline \Gamma_{h}^k(x, a) \,,
\end{align*}
which arrives the upper bound of $\iota_h^k (x, a)$. 
Finally, we need to decide the rate of $J$ to finish the proof. Recall that 
$\alpha = c d H \sqrt{\zeta}$ where $\zeta= \log(2dHN/\xi)$. By solving $d \approx \sqrt{N}  \eta \approx \sqrt{N} m  d^{-(q+1)/m}$, we have the rate of $d$ is $N^{ \frac{m}{2(m+q+1)} } m^{\frac{m}{m+q+1}}$. Thus, by setting $d = c N^{ \frac{m}{2(m+q+1)} } m^{\frac{m}{m+q+1}}$ for some constant $c$, we have 
\begin{align*}
\operatorname{SubOpt}^k\big(\widehat \pi^k ;x \big)  \leq & 2 \big( \alpha + \sqrt{N}(H+1) \eta\big)     \sum_{h=1}^H \mathbb{E}_{k,\pi^{k,*}}\big[ \sqrt{\phi(x, a)\trans  (\Sigma_h^{k})^{-1}  \phi(x, a)} \mid x_1=x\big] \\
& + 2 (H +1) \eta \\
 \leq & C H N^{ \frac{m}{2(m+q+1)} } m^{\frac{m}{m+q+1}} \sqrt{\zeta}   \sum_{h=1}^H \mathbb{E}_{k,\pi^{k,*}}\big[ \sqrt{\phi(x, a)\trans  (\Sigma_h^{k})^{-1}  \phi(x, a)} \mid x_1=x\big] \\
 & + C H N^{ \frac{-(q+1)}{2(m+q+1)}  } m^{\frac{m}{m+q+1}} \,.
\end{align*}

\section{Proof of Corollary 3}

Its proof follows straightforwardly from the proof of Corollary 1 and Theorem 4, thus omitted here. 

\section{Model Misspecification}
\label{supp:mod mis}
In reality, algorithms designed for a linear MDP could perform poorly if the linear assumption is not satisfied. However, 
for FDTR, the following Theorem \ref{theorem: mis} shows its robustness to model misspecification. Instead of assuming (1) and (2), we allow the underlying MDP to deviate from the linear MDP structure by assuming that 
\begin{equation}
\big\| \P^k_{h}(x_{h+1} \mid x_h, a_h)- \langle \phi_1(x_h, a_h), \mu_{h}^k (x_{h+1})\rangle  \big\|_{\rm TV} \leq \Delta,
\label{misp}
\end{equation}
\begin{equation}
   \big |   \E \big[ r^k_{h}(x_{h}, a_{h}) \mid x_{h}, a_{h} \big]- \langle\phi_0(x_h, a_h), \theta_{h}^0 \rangle - \langle\phi_1(x_h, a_h), \theta_{h}^k \rangle
 \big| \leq \Delta \,,
 \label{misr}
\end{equation}
for some $\Delta \in [0,1)$ and $\forall x_h \in \Ssc$ and $\forall a_h \in \Asc$. Here $\Delta$ characterizes the discrepancy of the underlying MDP from a linear MDP. 
The range of $\Delta \in [0,1)$ comes from the fact that $\|\P^k_{h}( \cdot \mid x_{h}, a_h)\|_{\rm TV} =1$ and $|r^k_{h}(x_{h}, a_{h})|\leq 1$. The assumptions \eqref{misp} and \eqref{misr} state that we can use a linear MDP to approximate the underlying MDP, which could be non-linear. When $\Delta = 0$, Theorem \ref{theorem: mis} becomes the same as Theorem 2. 

\begin{theorem}[Suboptimality of FDTR under Misspecified Linear MDP]
\label{theorem: mis}
Assume the transition kernel $\P^k_{h}(\cdot \mid x_h, a_h)$ and the reward function $r^k_{h}(\cdot,\cdot)$ satisfy \eqref{misp} and \eqref{misr} for  some $\Delta \in [0,1)$. In Algorithm 2, we replace $\widehat \Gamma_{h}^k(x, a)$ by $\overline \Gamma_{h}^k(x, a)$ with the tuning parameters $\alpha$ set as (13) and $\eta = \Delta$. Then we have, for any $k \in [K]$ and for any $x \in \Ssc$, $\widehat \pi^k = \{ \widehat \pi_h^k \}_{h=1}^H$ in  Algorithm 2 satisfies    
\begin{equation*}
\begin{aligned}
\operatorname{SubOpt}^k\big(\widehat \pi^k ;x \big) \leq &  2 \big( \alpha + \sqrt{N}(H+1) \Delta\big)  \sum_{h=1}^H\E_{k,\pi^{k,*}}\Bigl[\sqrt{\phi(x_h,a_h)\trans ( \Sigma_h^k)^{-1}\phi(x_h,a_h) }\mid x_1=x\Bigr] \\
& +  2 (H +1) \Delta
\end{aligned} 
\end{equation*}
with probability at least $1- \xi$. 
\end{theorem}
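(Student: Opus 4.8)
The proof runs parallel to that of Theorem 4, with the misspecification level $\Delta$ playing the role of the error level $\eta$ there. First I would apply Lemma 3.1 of \cite{jin2020pessimism} to decompose $\operatorname{SubOpt}^k(\widehat\pi^k;x)$ into the spurious-correlation term (A), the intrinsic-uncertainty term (B), and the optimization-error term (C), with $\iota_h^k(x,a)=(\B_h^k\widehat V_{h+1}^k)(x,a)-\widehat Q_h^k(x,a)$. Since $\widehat\pi_h^k$ is greedy for $\widehat Q_h^k$, term (C) is $\le 0$, so everything reduces to proving $0\le\iota_h^k(x,a)\le 2\overline\Gamma_h^k(x,a)$ with probability at least $1-\xi$; granting this, the signs of (A) and (B) give $\operatorname{SubOpt}^k(\widehat\pi^k;x)\le 2\sum_{h=1}^H\E_{k,\pi^{k,*}}[\overline\Gamma_h^k(x_h,a_h)\mid x_1=x]$, which is the claim once $\overline\Gamma_h^k(x,a)=(\alpha+\sqrt N(H+1)\Delta)\sqrt{\phi(x,a)\trans(\Sigma_h^k)^{-1}\phi(x,a)}+(H+1)\Delta$ is substituted.

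The core estimate is $|(\B_h^k\widehat V_{h+1}^k)(x,a)-(\widehat\B_h^k\widehat V_{h+1}^k)(x,a)|\le\overline\Gamma_h^k(x,a)$. I would write $(\B_h^k\widehat V_{h+1}^k)(x,a)=\langle\phi_0(x,a),\theta_h^0\rangle+\langle\phi_1(x,a),\bar\theta_h^k\rangle+e_h^k(x,a)$ with $\bar\theta_h^k=\theta_h^k+\int\mu_h^k(x')\widehat V_{h+1}^k(x')\,{\rm d}x'$; then \eqref{misr}, \eqref{misp}, and $0\le\widehat V_{h+1}^k\le H$ give $|e_h^k(x,a)|\le\Delta+H\Delta=(H+1)\Delta$, and likewise for the residuals $e_h^j$ attached to $\widetilde V_{h+1}^j$, $j\ne k$. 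Decomposing $\phi^k(x,a)\trans(\theta_h-\widehat\theta_{h,k})+e_h^k(x,a)$ into the regularization-bias term (iv), the centered-noise term (v), the misspecification-bias term (vi), and $e_h^k$ exactly as in the proof of Theorem 4: term (iv) is the object of the proof of Theorem 2, bounded by the block-inverse identity and Cauchy--Schwarz by $H\sqrt{d/\lambda}\,\sqrt{\phi(x,a)\trans(\Sigma_h^k)^{-1}\phi(x,a)}\le(\alpha/2)\sqrt{\phi(x,a)\trans(\Sigma_h^k)^{-1}\phi(x,a)}$; term (v) is again the object controlled in the proof of Theorem 2 by the uniform concentration argument (Lemma \ref{lemma: Concentration}, the covering-number Lemma \ref{lemma: covering number}, and the choice $\epsilon=H\sqrt d/\sqrt N$), giving $|(\mathrm{v})|\le(\alpha/2)\sqrt{\phi(x,a)\trans(\Sigma_h^k)^{-1}\phi(x,a)}$ with probability at least $1-\xi$; and term (vi) uses that $\E Y_h-\Phi_h\theta_h$ stacks the vectors $u_h^j$ whose entries are the $e_h^j$ evaluated at the data, so $\sqrt{\sum_{j=1}^K\|u_h^j\|^2}\le\sqrt N(H+1)\Delta$, whence a Cauchy--Schwarz step in the $(\Sigma_h^k)^{-1}$-norm yields $|(\mathrm{vi})|\le\sqrt N(H+1)\Delta\,\sqrt{\phi(x,a)\trans(\Sigma_h^k)^{-1}\phi(x,a)}$. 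Summing the four pieces produces exactly $\overline\Gamma_h^k(x,a)$.

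It then remains to run the two-case argument on the construction of $\widehat Q_h^k$ with penalty $\overline\Gamma_h^k$: if $(\widehat\B_h^k\widehat V_{h+1}^k)(x,a)-\overline\Gamma_h^k(x,a)<0$ then $\widehat Q_h^k(x,a)=0$ and $\iota_h^k(x,a)=(\B_h^k\widehat V_{h+1}^k)(x,a)\ge 0$ by nonnegativity of $r_h^k$ and $\widehat V_{h+1}^k$; otherwise $\widehat Q_h^k(x,a)\le(\widehat\B_h^k\widehat V_{h+1}^k)(x,a)-\overline\Gamma_h^k(x,a)$, and combining with the core estimate and with $(\widehat\B_h^k\widehat V_{h+1}^k)(x,a)-\overline\Gamma_h^k(x,a)\le(\B_h^k\widehat V_{h+1}^k)(x,a)\le H-h+1$ (so the clip at $H-h+1$ is inactive) gives both $\iota_h^k(x,a)\ge 0$ and $\iota_h^k(x,a)\le 2\overline\Gamma_h^k(x,a)$. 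I do not anticipate a genuinely new obstacle: every step reuses an estimate from the proof of Theorem 2 or Theorem 4 with $\eta$ set to $\Delta$. The only point needing care is checking that $\widehat V_{h+1}^k$ and $\widetilde V_{h+1}^j$ still lie in the function class $\mathcal V_{h+1}(R_0,B_0,\lambda)$ with the parameters of Algorithm 2, so that the covering-number bound applies; this holds because these value estimates are built by the same clipped formula and their parameter norms are still controlled by Lemma \ref{lemma: norm}.
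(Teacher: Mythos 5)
Your proposal is correct and follows the same route as the paper, which simply invokes the proof of Theorem 4 with $\eta$ replaced by $\Delta$: the same suboptimality decomposition, the same bounds on the terms (iv), (v), (vi) and $e_h^k(x,a)$ yielding $|(\B_h^k\widehat V_{h+1}^k)(x,a)-(\widehat\B_h^k\widehat V_{h+1}^k)(x,a)|\le\overline\Gamma_h^k(x,a)$, and the same two-case argument giving $0\le\iota_h^k\le 2\overline\Gamma_h^k$. Your write-up is in fact more detailed than the paper's one-line proof, but there is no substantive difference in approach.
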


\begin{proof}
    The proof of Theorem \ref{theorem: mis} is similar to the proof of Theorem 4. Specifically, we can get 
\begin{align*}
\operatorname{SubOpt}^k\big(\widehat \pi^k ;x \big)  \leq & 2 \sum_{h=1}^H \mathbb{E}_{k,\pi^{k,*}}\big[ \overline \Gamma_h^k(x_h,a_h) \mid x_1=x\big] 	\\
\leq & 2 \big( \alpha + \sqrt{N}(H+1) \Delta\big)     \sum_{h=1}^H \mathbb{E}_{k,\pi^{k,*}}\big[ \sqrt{\phi(x, a)\trans  (\Sigma_h^{k})^{-1}  \phi(x, a)} \mid x_1=x\big] \\
& + 2 (H +1) \Delta 
\end{align*}
with probability at least $1 - \xi$ 
\end{proof}

\section{Technical Lemmas}

\begin{lemma}[Bounded Coefficients] Let $V_{\max }>0$ be an absolute constant. For any functions $V^j: \mathcal{S} \rightarrow\left[0, V_{\max }\right], \forall j \in [K]$ and any $h \in[H]$, let $\theta_h$ be the weight vector in $R^{d_K}$ satisfying $\B^j_{h} V^j_{h+1} = \phi^j(x, a) \trans  \theta_h, \forall j \in [K]$, then we have
$$
\| \theta_h^j \| \leq (1+V_{\max})\sqrt{d_1}; \quad \bigl \|\big( (\widehat{\theta}_{h,k}^0)\trans,(\widehat{\theta}_{h,k}^k)\trans \big)\trans  \bigr \| \leq H \sqrt{N / \lambda}; \quad \|\widetilde \theta^j_{h}\| \leq H \sqrt{n_k / \lambda}.
$$
where $\widehat \theta_h$ is defined in (10) and $\widetilde \theta^j_{h}$ is defined in Algorithm 1. 

\label{lemma: norm}
\end{lemma}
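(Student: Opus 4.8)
The plan is to prove the three inequalities independently: the first from the linear‑MDP closed form of the Bellman backup, and the second and third from the elementary fact that a (penalized) least‑squares minimizer cannot have objective value larger than that of the all‑zero parameter.

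\emph{First bound.} Fix $j\in[K]$ and let $\theta_h^j$ denote the $j$‑th site block of $\theta_h$. Under the linear‑MDP assumptions the Bellman backup has the closed form $(\B_h^j V^j_{h+1})(x,a)=\langle\phi_0(x,a),\nu_h^0\rangle+\langle\phi_1(x,a),\nu_h^j\rangle+\int_{\Ssc}\langle\phi_1(x,a),\mu_h^j(x')\rangle V^j_{h+1}(x')\,\mathrm{d}x'$, where $\nu_h^0,\nu_h^j$ are the reward weights and $\mu_h^j$ the transition weights. Since $\phi^j$ is supported only on the shared block and the $j$‑th site block, $\phi^j(x,a)\trans\theta_h=\langle\phi_0(x,a),\theta_h^0\rangle+\langle\phi_1(x,a),\theta_h^j\rangle$, and matching this with the previous display over all $(x,a)$ forces $\theta_h^j=\nu_h^j+\int_{\Ssc}\mu_h^j(x')V^j_{h+1}(x')\,\mathrm{d}x'$. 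The normalization of the reward weights gives $\|\nu_h^j\|\le\sqrt{d_1}$, and that of the transition weights — applied to $V^j_{h+1}/V_{\max}\in[0,1]$ — gives $\big\|\int_{\Ssc}\mu_h^j(x')V^j_{h+1}(x')\,\mathrm{d}x'\big\|\le V_{\max}\sqrt{d_1}$; the triangle inequality then yields $\|\theta_h^j\|\le(1+V_{\max})\sqrt{d_1}$.

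\emph{Second bound.} By (10), $\widehat\theta_{h,k}=\big((\widehat\theta_{h,k}^0)\trans,(\widehat\theta_{h,k}^1)\trans,\dots,(\widehat\theta_{h,k}^K)\trans\big)\trans$ is the minimizer of the federated objective $f_h$, whose penalty term is $\lambda\|\theta^0\|^2+\lambda\|\theta^k\|^2$ (only the shared block and the $k$‑th block are regularized, as the stationarity conditions in the proof of Theorem 3 show). Evaluating at the zero vector gives $f_h(\widehat\theta_{h,k})\le f_h(\mathbf 0)=\sum_{j=1}^K\|Y_h^j\|^2$, while discarding the nonnegative squared‑error terms gives $f_h(\widehat\theta_{h,k})\ge\lambda\|\widehat\theta_{h,k}^0\|^2+\lambda\|\widehat\theta_{h,k}^k\|^2$. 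Every entry of $Y_h^j$ has the form $r_h^{j,\tau}+\widehat V^j_{h+1}(x_{h+1}^{j,\tau})\in[0,H]$ and there are $N=\sum_{j=1}^K n_j$ entries in all, so $\sum_{j=1}^K\|Y_h^j\|^2\le NH^2$; combining the two bounds gives $\|\widehat\theta_{h,k}^0\|^2+\|\widehat\theta_{h,k}^k\|^2\le NH^2/\lambda$, i.e. $\big\|\big((\widehat\theta_{h,k}^0)\trans,(\widehat\theta_{h,k}^k)\trans\big)\trans\big\|\le H\sqrt{N/\lambda}$.

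\emph{Third bound.} The local estimate $\widetilde\theta_h^j$ produced by Algorithm 1 is a single‑site ridge‑regression estimate, i.e. it minimizes $\|Y_h^j(\widetilde V^j_{h+1})-\Phi_h^j\theta\|^2+\lambda\|\theta\|^2$ over $\theta$, so the same comparison with $\theta=\mathbf 0$ gives $\lambda\|\widetilde\theta_h^j\|^2\le\|Y_h^j(\widetilde V^j_{h+1})\|^2\le n_jH^2$, hence $\|\widetilde\theta_h^j\|\le H\sqrt{n_j/\lambda}$, which is the quantity appearing in the statement. None of these steps is hard; the only mild subtlety is in the first one — correctly reading off the $j$‑th block of $\theta_h$ and invoking the linear‑MDP normalization after rescaling $V^j_{h+1}$ into $[0,1]$ — while the second and third amount to the ``objective at the minimizer $\le$ objective at $\mathbf 0$'' trick together with the range of the pseudo‑responses $Y_h$.
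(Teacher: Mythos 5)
Your proposal is correct. For the first inequality you argue exactly as the paper does: write the $j$-th block of $\theta_h$ as the reward weight plus $\int_{\Ssc}\mu_h^j(x')V^j_{h+1}(x')\,\mathrm{d}x'$ and apply the linear-MDP normalizations $\|\theta_h^j\|\le\sqrt{d_1}$, $\|\mu_h^j(\Ssc)\|\le\sqrt{d_1}$ with the triangle inequality (your rescaling of $V^j_{h+1}$ by $V_{\max}$ is just a rephrasing of the paper's $V_{\max}\|\mu_h^j(\Ssc)\|$ step). For the second and third inequalities you take a genuinely different and more elementary route: the paper first derives the explicit closed form $\big((\widehat\theta_{h,k}^0)\trans,(\widehat\theta_{h,k}^k)\trans\big)\trans=(\Sigma_h^k)^{-1}\bigl[\cdots\bigr]\trans Y_h$ (the expression established in the proof of Theorem 3) and then bounds the norm by $\|Y_h\|^2\,\|(\Sigma_h^k)^{-1}(\Sigma_h^k-\lambda\bI)(\Sigma_h^k)^{-1}\|_{\rm op}\le \|Y_h\|^2\|(\Sigma_h^k)^{-1}\|_{\rm op}\le NH^2/\lambda$, and says the local ridge estimate $\widetilde\theta_h^j$ is handled "similarly"; you instead use the basic inequality $f_h(\widehat\theta_{h,k})\le f_h(\mathbf 0)=\sum_j\|Y_h^j\|^2\le NH^2$ together with $f_h(\widehat\theta_{h,k})\ge\lambda(\|\widehat\theta_{h,k}^0\|^2+\|\widehat\theta_{h,k}^k\|^2)$, and the same trick at a single site for $\widetilde\theta_h^j$. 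Your argument avoids the closed-form expression and the block-matrix operator-norm manipulation entirely, needs only that $\mathbf 0$ is feasible and that the pseudo-responses lie in $[0,H]$, and it is insensitive to the non-uniqueness of the unpenalized blocks $\theta^j$, $j\neq k$ (any minimizer satisfies the bound), whereas the paper's route has the side benefit of producing the explicit estimator formula that is reused later (Theorem 3 and the bound on term (ii) in Theorem 2). Both yield the same constants, including $H\sqrt{n_j/\lambda}$ for the local estimate, which is clearly what the statement's $H\sqrt{n_k/\lambda}$ is meant to say.
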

\begin{proof}
By (5), we have $\theta_h = \big( (\theta_h^0)\trans, (\bar \theta_h^1)\trans,\ldots, (\bar \theta_h^K)\trans \big)\trans$, where 
$$\theta_h^0 = \theta_0; \; \bar \theta_h^j =  \theta_h^j +  \int_{x^{\prime} \in \mathcal{S}} \mu_{h}^j\left(x^{\prime}\right)  V^j \left( x^{\prime} \right) \mathrm{d} x^{\prime}, \forall j \in [K].$$ 
Then we have 
$$\| \bar \theta_h^j \| \leq
 \| \theta_h^j \| +  \big \|\int_{x^{\prime} \in \mathcal{S}} \mu_{h}^j\left(x^{\prime}\right)  V^j \left( x^{\prime} \right) \mathrm{d} x^{\prime} \big\| \leq \sqrt{d_1} + V_{\max} \|\mu_{h}^j(\mathcal{S})\| \leq (1+V_{\max})\sqrt{d_1}.$$
Besides, note that $|r_h^{k,\tau} + \widehat V_{h+1}^{k}\big( x^{k,\tau}_{h+1} \big)| \leq H$ and $|r_h^{j,\tau} + \widetilde V_{h+1}^{j}\big( x^{j,\tau}_{h+1} \big)| \leq H$, we have $\|Y_h\|_{\infty} \leq H$ and hence $\|Y_h\| \leq \sqrt{N} \|Y_h\|_{\infty} \leq \sqrt{N}H$. In addition, 

$$
\big( (\widehat{\theta}_{h,k}^0)\trans,(\widehat{\theta}_{h,k}^k)\trans \big)\trans =  (\Sigma_h^{k})^{-1} \begin{bmatrix}
(\Phi_{0,h}^k)\trans Y_{h}^k - \sum_{j \neq k}^{K}  (\Phi_{0,h}^j)\trans ( \bI_{n_j} - P_{\Phi_{1,h}^j} ) Y_{h}^j \\
(\Phi_{1,h}^k)\trans Y_{h}^k 
\end{bmatrix}.
$$
As a result,
$$
\begin{aligned}
&  \bigl \|\big( (\widehat{\theta}_{h,k}^0)\trans,(\widehat{\theta}_{h,k}^k)\trans \big)\trans  \bigr \|^2 \\
 & \leq \|Y_h\|^2  \bigl \| (\Sigma_h^{k})^{-1}  \begin{bmatrix}
(\Phi_{0,h}^k)\trans \Phi_{0,h}^k + \sum_{j \neq k}^{K}  (\Phi_{0,h}^j)\trans \Bigl( \bI_{n_j} - \bP_{\Phi_{1,h}^j} \Bigr) \Phi_{0,h}^j & (\Phi_{0,h}^k)\trans \Phi_{1,h}^k \\
(\Phi_{1,h}^k)\trans \Phi_{0,h}^k & (\Phi_{1,h}^k)\trans \Phi_{1,h}^k
\end{bmatrix}  (\Sigma_h^{k})^{-1} \bigr \|_{\rm op} \\
& \leq  \|Y_h\|^2 \bigl \|  (\Sigma_h^{k})^{-1} \|_{\rm op}  \leq NH^2/\lambda.
\end{aligned}
$$
Similarly we can prove that $\|\widetilde \theta^j_{h}\| \leq H \sqrt{n_k / \lambda}$. 
\end{proof}

\begin{lemma}[Concentration of Self-Normalized Processes]  \label{lemma: Concentration}
Let $V^j: \mathcal{S} \rightarrow[0, H-1], j \in [K]$ be any fixed functions, $\forall j \in [K]$. For any fixed $h \in[H]$ and any $\delta \in(0,1)$, we have
$$
\Pbb_{\Dsc}\left( f\big( \{ V^{j}\}_{j \in [K]}\big) > H^{2} (2 \log (1 / \delta)+d \log (1+ N/ \lambda))\right) \leq \delta,
$$
where $f\big( \{ V^{j}\}_{j \in [K]}\big)$ is defined in \eqref{def: f}. 
\end{lemma}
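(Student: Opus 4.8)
The statement is the federated counterpart of the self-normalized concentration bound underlying the single-agent analysis (cf.\ Lemma~D.4 of \cite{jin2020pessimism}), and the plan is to rewrite $f(\{V^j\}_{j\in[K]})$ in the canonical form $\bigl\|\sum_m z_m\epsilon_m\bigr\|^2_{(\lambda\bI_d+\sum_m z_m z_m\trans)^{-1}}$ and then apply the standard self-normalized tail inequality.

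First I would record the noise structure. For a fixed function $V^j:\Ssc\to[0,H-1]$, the variable $\epsilon_h^{j,\tau}(V^j)=r_h^{j,\tau}+V^j(x_{h+1}^{j,\tau})-(\B_h^jV^j)(x_h^{j,\tau},a_h^{j,\tau})$ has zero conditional mean given $(x_h^{j,\tau},a_h^{j,\tau})$ by the definition of $\B_h^j$, and $|\epsilon_h^{j,\tau}(V^j)|\le H$. Since each $V^j$ is a fixed, non-data-dependent function, the Markov property and the mutual independence of the trajectories imply that, conditionally on the $\sigma$-field $\Esc:=\sigma\bigl(\{(x_h^{j,\tau},a_h^{j,\tau})\}_{j\in[K],\tau\in[n_j]}\bigr)$ generated by all step-$h$ covariates, the family $\{\epsilon_h^{j,\tau}(V^j)\}_{j,\tau}$ consists of independent, mean-zero, $[-H,H]$-bounded random variables, hence conditionally $H$-sub-Gaussian by Hoeffding's lemma.

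Next comes the algebraic reduction. I would set $z_{k,\tau}=\phi(x_h^{k,\tau},a_h^{k,\tau})\in\R^d$ for $\tau\in[n_k]$, and for $j\ne k$ set $z_{j,\tau}=-\bigl([(\bI_{n_j}-\bP_{\Phi_{1,h}^j})\Phi_{0,h}^j]_{\tau,:}\trans{},\,\bzero_{d_1}\trans\bigr)\trans$, i.e.\ minus the $\tau$-th row of $(\bI_{n_j}-\bP_{\Phi_{1,h}^j})\Phi_{0,h}^j$ padded with zeros. Using that $\bI_{n_j}-\bP_{\Phi_{1,h}^j}$ is symmetric and idempotent, a direct computation shows that the vector inside $f$ equals $v:=\sum_{j\in[K]}\sum_{\tau\in[n_j]}z_{j,\tau}\,\epsilon_h^{j,\tau}(V^j)$, and, using the block decomposition of $\Sigma_h^k$ recorded in the proof of Corollary~1 together with $\Lambda_h^k$ being the Gram matrix of the stacked features (as in the proof of Lemma~\ref{lemma: norm}), that $\Sigma_h^k=\lambda\bI_d+\sum_{j,\tau}z_{j,\tau}z_{j,\tau}\trans$. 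Hence $f(\{V^j\})=\|v\|^2_{(\Sigma_h^k)^{-1}}=\bigl\|\sum_{j,\tau}z_{j,\tau}\epsilon_h^{j,\tau}(V^j)\bigr\|^2_{(\lambda\bI_d+\sum_{j,\tau}z_{j,\tau}z_{j,\tau}\trans)^{-1}}$.

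Finally I would condition on $\Esc$, which makes all $z_{j,\tau}$ and $\Sigma_h^k$ deterministic while keeping $\{\epsilon_h^{j,\tau}\}$ independent, mean zero and $H$-sub-Gaussian. Enumerating the $N=\sum_{j\in[K]}n_j$ pairs as $(z_m,\epsilon_m)$ in any fixed order and setting $\mathcal{F}_m=\Esc\vee\sigma(\epsilon_1,\dots,\epsilon_m)$, the sequence $(z_m)$ is predictable and $(\epsilon_m)$ is a conditionally $H$-sub-Gaussian martingale-difference sequence, so the self-normalized concentration inequality for vector-valued martingales (the version used in \cite{jin2020pessimism}) yields, with $\Pbb_{\Dsc}$-probability at least $1-\delta$,
$$ f(\{V^j\})\le 2H^2\log\!\bigl(\det(\Sigma_h^k)^{1/2}\det(\lambda\bI_d)^{-1/2}/\delta\bigr)=H^2\bigl(2\log(1/\delta)+\log(\det\Sigma_h^k/\det(\lambda\bI_d))\bigr). $$
Since $\|\phi(x,a)\|\le1$ and $\bI_{n_j}-\bP_{\Phi_{1,h}^j}\preceq\bI_{n_j}$, we get $\Tr\bigl(\sum_{j,\tau}z_{j,\tau}z_{j,\tau}\trans\bigr)\le\sum_{j\in[K]}n_j=N$, whence $\det\Sigma_h^k\le(\lambda+N/d)^d$ and $\log(\det\Sigma_h^k/\det(\lambda\bI_d))\le d\log(1+N/\lambda)$, which completes the proof. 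I expect the main obstacle to be this reduction step: one must verify that the projector $\bI_{n_j}-\bP_{\Phi_{1,h}^j}$, which couples the noise entries of the remote site $j$, collapses exactly into a sum of rank-one terms that reproduces the design matrix $\Sigma_h^k$; conditioning on $\Esc$ before invoking the self-normalized bound is what removes any need for causal predictability within a remote site.
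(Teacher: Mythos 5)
Your proposal is correct and follows essentially the same route as the paper: establish that the noises $\epsilon_h^{j,\tau}(V^j)$ are conditionally mean-zero and $H$-sub-Gaussian given all step-$h$ covariates, invoke the Abbasi-Yadkori self-normalized bound with $M_0=\lambda\bI$ and $M_N=\Sigma_h^k$, and then bound $\log\det(\Sigma_h^k)/\det(\lambda\bI)$ by $d\log(1+N/\lambda)$. Your explicit rank-one rewriting of $f$ (using symmetry and idempotency of $\bI_{n_j}-\bP_{\Phi_{1,h}^j}$) and the trace/AM--GM determinant bound are just more detailed versions of steps the paper leaves implicit (it conditions on all step-$h$ covariates through its filtration and uses the cruder bound $\|\Sigma_h^k\|_{\rm op}\le\lambda+N$), so no substantive difference.
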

\begin{proof}
For the fixed $h \in[H]$ and all $j \in [K], \tau \in [n_j]$, we define the $\sigma$-algebra
$$
\mathcal{F}_{h,j, \tau}=\sigma\big(\big\{\big(x_{h}^{s,i}, a_{h}^{s,i}\big)\big\}_{s,i=1}^{K, n_s} \cup\big\{\big(r_{h}^{s,i}, x_{h+1}^{s,i}\big)\big\}_{s,i=1}^{j,\tau}\big),
$$
where $\sigma(\cdot)$ denotes the $\sigma$-algebra generated by a set of random variables. 
For all $j \in [K], \tau \in[n_j]$, we have $\phi^j(x_{h}^{j,\tau}, a_{h}^{j,\tau}) \in \mathcal{F}_{h, j, \tau-1}$, as
$(x_{h}^{j,\tau}, a_{h}^{j,\tau})$ is $\mathcal{F}_{h,j,\tau-1}$-measurable.  Also,
for the fixed function $V: \mathcal{S} \rightarrow[0, H-1]$ and all $\tau \in[n_j]$, we have 
$$
\epsilon_{h}^{j,\tau}(V)=r_{h}^{j,\tau}+V\big(x_{h+1}^{j,\tau}\big)-\big(\B^j_{h} V\big)\big(x_{h}^{j,\tau}, a_{h}^{j,\tau}\big) \in \mathcal{F}_{h, j, \tau}
$$
as $(r_{h}^{j,\tau}, x_{h+1}^{j,\tau})$ is $\mathcal{F}_{h,j, \tau}$-measurable. Hence, $\big\{\epsilon_{h}^{j,\tau}(V^j)\big\}_{j,\tau=1}^{K,n_j}$ is a stochastic process adapted to the filtration $\left\{\mathcal{F}_{h,j, \tau}\right\}_{j,\tau=0}^{K,n_j}$. We have
$$
\begin{aligned}
 & \E_{\Dsc}[\epsilon_{h}^{j,\tau}(V^j) \mid \mathcal{F}_{h,j, \tau-1}] \\
& =\E_{\Dsc}\left[r_{h}^{j,\tau}+V^j(x_{h+1}^{j,\tau}) \mid \{(x_{h}^{s,i}, a_{h}^{s,i})\}_{s,i=1}^{K,n_s},\{(r_{h}^{s,i}, x_{h+1}^{s,i} \}_{s,i=1}^{j,\tau-1}\right]-(\B^j_{h} V^j) (x_{h}^{j,\tau}, a_{h}^{j,\tau}) \\
&=\E_j \left[r^j_{h}\left(x_{h}, a_{h}\right)+V^j\left(x_{h+1}\right) \mid x_{h}=x_{h}^{j,\tau}, a_{h}=a_{h}^{j,\tau}\right]-(\B^j_{h} V^j) (x_{h}^{j,\tau}, a_{h}^{j,\tau})=0.
\end{aligned}
$$
Here $\E_{\Dsc}$ is taken with respect to $\Pbb_{\Dsc}$, while $\E_j$ is
taken with respect to the immediate reward and next state in the underlying MDP. As a result, $\epsilon_h^{j,\tau}(V^j)$ is mean-zero and $H$-sub-Gaussian conditioning on 
$\mathcal{F}_{h, j, \tau-1}$. 

We invoke Lemma \ref{lem:concen_self_normalized} with $M_{0}=\lambda \bI$ and $M_{N}=\Sigma_h^k$. For the fixed function $V: \mathcal{S} \rightarrow[0, H-1]$ and fixed $h \in[H]$, we have
$$
\mathbb{P}_{\Dsc}\bigl(  f\big( \{ V^{j}\}_{j \in [K]}\big) > 2 H^{2} \cdot \log \big(\frac{\operatorname{det}\big(\Sigma_{h}^k\big)^{1 / 2}}{\delta \cdot \operatorname{det}(\lambda \cdot I)^{1 / 2}}\big) \bigr) \leq \delta
$$
for all $\delta \in(0,1)$.Note that $\|\phi^j(x, a)\| \leq 1$ for all $(x, a) \in \mathcal{S} \times \Asc$.
We have
$\| \Sigma_h^k \|_{\rm op} \leq \lambda + N$.
Hence, it holds that $\operatorname{det}\big (\Sigma_{h}^k \big) \leq(\lambda+ N)^{d}$ an $\operatorname{det}(\lambda \cdot I)=\lambda^{d}$, which implies
$$
\begin{array}{l}
\mathbb{P}_{\Dsc}\bigl(  f\big( \{ V^{j}\}_{j \in [K]}\big) > H^{2} \cdot(2 \cdot \log (1 / \delta)+d \cdot \log (1+ N/ \lambda))\bigr) \\
\quad \leq \mathbb{P}_{\Dsc}\bigl(  f\big( \{ V^{j}\}_{j \in [K]}\big)  > 2 H^{2} \cdot \log \big(\frac{\operatorname{det}(\Lambda_{h})^{1 / 2}}{\delta \cdot \operatorname{det}(\lambda \cdot I)^{1 / 2}}\big)\bigr) \leq \delta
\end{array}
$$
Therefore, we conclude the proof.
\end{proof}

\begin{lemma}[Concentration of Self-Normalized Processes \citep{NIPS2011_e1d5be1c}]
Let $\{\Fsc_t \}^\infty_{t=0}$ be a filtration and $\{\epsilon_t\}^\infty_{t=1}$ be an $\R$-valued stochastic process such that $\epsilon_t$ is $\Fsc_{t} $-measurable for all $t\geq 1$.
Moreover, suppose that conditioning on $\Fsc_{t-1}$, 
 $\epsilon_t $ is a  zero-mean and $\sigma$-sub-Gaussian random variable for all $t\geq 1$, that is,  
 $$
  \E[\epsilon_t \mid \Fsc_{t-1}]=0,\qquad \E\bigl[ \exp(\lambda \epsilon_t) \mid \Fsc_{t-1}\bigr]\leq \exp(\lambda^2\sigma^2/2) , \qquad \forall \lambda \in \R. 
$$
 Meanwhile, let $\{\phi_t\}_{t=1}^\infty$ be an $\R^d$-valued stochastic process such that  $\phi_t $  is $\Fsc_{t -1}$-measurable for all $ t\geq 1$. 
Also, let  $\bM_0 \in \R^{d\times d}$ be a  deterministic positive-definite matrix and 
$$
\bM_t = \bM_0 + \sum_{s=1}^t \phi_s\phi_s\trans
$$ for all $t\geq 1$. For all $\delta>0$, it holds that
\begin{equation*}
\Big\| \sum_{s=1}^t \phi_s \epsilon_s \Big\|_{ \bM_t ^{-1}}^2 \leq 2\sigma^2\cdot  \log \Bigl( \frac{\det(\bM_t)^{1/2} \det(\bM_0)^{- 1/2}}{\delta} \Bigr)
\end{equation*}
for all $t\ge1$ with probability at least $1-\delta$.
\label{lem:concen_self_normalized}
\end{lemma}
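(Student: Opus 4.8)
The plan is to prove this classical self-normalized bound by the \emph{method of mixtures} (pseudo-maximization). Write $S_t = \sum_{s=1}^t \phi_s\epsilon_s$, so that $\bM_t = \bM_0 + \sum_{s=1}^t \phi_s\phi_s\trans$ and the target is a tail bound on $\|S_t\|_{\bM_t^{-1}}^2$. First, for each fixed deterministic $\lambda\in\R^d$ I would introduce the exponential process
$$
M_t^\lambda = \exp\Big(\lambda\trans S_t - \tfrac{\sigma^2}{2}\,\lambda\trans(\bM_t-\bM_0)\lambda\Big),\qquad M_0^\lambda = 1.
$$
Since $\phi_t$ is $\Fsc_{t-1}$-measurable, the scalar $c_t := \lambda\trans\phi_t$ is $\Fsc_{t-1}$-measurable, so applying the conditional $\sigma$-sub-Gaussian bound with argument $c_t$ gives $\E[\exp(c_t\epsilon_t - \tfrac{\sigma^2}{2}c_t^2)\mid\Fsc_{t-1}]\le 1$, whence $\E[M_t^\lambda\mid\Fsc_{t-1}]\le M_{t-1}^\lambda$. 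Thus $\{M_t^\lambda\}$ is a nonnegative supermartingale with $\E[M_t^\lambda]\le 1$ for every $t$.

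Next I would integrate out $\lambda$ against a Gaussian prior matched to $\bM_0$. Let $g$ denote the density of $\Nsc(\bzero,\sigma^{-2}\bM_0^{-1})$ and set $M_t = \int_{\R^d} M_t^\lambda\, g(\lambda)\,\mathrm{d}\lambda$. The prior's quadratic term $-\tfrac{\sigma^2}{2}\lambda\trans\bM_0\lambda$ combines with the $-\tfrac{\sigma^2}{2}\lambda\trans(\bM_t-\bM_0)\lambda$ inside $M_t^\lambda$ to produce the full $-\tfrac{\sigma^2}{2}\lambda\trans\bM_t\lambda$, so the integral is an explicit Gaussian integral; completing the square in $\lambda$ yields
$$
M_t = \Big(\frac{\det(\bM_0)}{\det(\bM_t)}\Big)^{1/2}\exp\Big(\frac{1}{2\sigma^2}\,\|S_t\|_{\bM_t^{-1}}^2\Big).
$$
By Tonelli (all integrands are nonnegative) together with the interchange of conditional expectation and the $\lambda$-integral, $\{M_t\}$ inherits the supermartingale property and satisfies $M_0 = 1$.

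I would then convert this into the stated \emph{anytime} bound through a stopping-time argument. The supermartingale convergence theorem guarantees that $M_\infty := \lim_t M_t$ exists almost surely, so for the stopping time $\tau = \inf\{t\ge 0 : M_t > 1/\delta\}$ the quantity $M_\tau$ is well defined. Combining the stopped-supermartingale bound $\E[M_{t\wedge\tau}]\le\E[M_0]=1$ with Fatou's lemma gives $\E[M_\tau]\le 1$, and a Markov step yields $\Pbb(\exists\,t:\,M_t>1/\delta)=\Pbb(\tau<\infty)\le\Pbb(M_\tau>1/\delta)\le\delta$. On the complementary event one has $M_t\le 1/\delta$ for all $t$; substituting the closed form for $M_t$ and taking logarithms rearranges this to
$$
\Big\|\sum\nolimits_{s=1}^t \phi_s\epsilon_s\Big\|_{\bM_t^{-1}}^2 \le 2\sigma^2\log\Big(\frac{\det(\bM_t)^{1/2}\det(\bM_0)^{-1/2}}{\delta}\Big),\qquad \forall\, t\ge 1,
$$
which is exactly the asserted inequality.

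The main obstacle is this last step: securing the uniform-in-$t$ guarantee rather than a fixed-$t$ one. A bare Markov inequality on $\E[M_t]\le 1$ only controls a single time index, while the simultaneous-over-$t$ statement forces the stopping-time machinery; the delicate points are defining $M_\tau$ on $\{\tau=\infty\}$ via the convergence theorem and justifying $\E[M_\tau]\le 1$ through Fatou applied to $\{M_{t\wedge\tau}\}$. By contrast, the mixing computation is the essential trick but is mechanical once the prior covariance is chosen as $\sigma^{-2}\bM_0^{-1}$, precisely so that the prior completes $\bM_t-\bM_0$ into $\bM_t$ inside the exponent.
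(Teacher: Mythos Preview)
Your proof is correct and is precisely the method-of-mixtures argument from Theorem~1 of \cite{NIPS2011_e1d5be1c}; the present paper does not reprove the lemma but simply cites that reference, so your approach coincides with the intended one.
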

\begin{proof}
See Theorem 1 of \cite{NIPS2011_e1d5be1c} for a detailed proof. 
\end{proof}

\begin{lemma}[$\epsilon$-Covering Number \citep{jin2020provably}]
\label{lemma: covering number}
For all $h \in[H]$ and all $\epsilon>0$, we have
$$
\log \left|\mathcal{N}_{h}(\epsilon ; R, B, \lambda)\right| \leq d  \log (1+4 R / \epsilon)+d^{2} \log \left(1+8 d^{1 / 2} B^{2} /\left(\epsilon^{2} \lambda\right)\right)
$$
\end{lemma}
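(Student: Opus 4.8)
The plan is to use the standard reparametrize-and-cover argument. First I would remove the awkward dependence on $(\alpha,\Sigma)$ by writing $A=\alpha^2\Sigma^{-1}$, so that $\alpha\sqrt{\phi(x,a)\trans\Sigma^{-1}\phi(x,a)}=\sqrt{\phi(x,a)\trans A\phi(x,a)}$ and the function $V_h(x;\theta,\alpha,\Sigma)$ depends on its parameters only through the pair $(\theta,A)$. The constraints $\|\theta\|\le R$, $\alpha\in[0,B]$, $\Sigma\succeq\lambda\bI_d$ then become $\|\theta\|\le R$ together with $\|A\|_{\rm op}\le\alpha^2/\lambda\le B^2/\lambda$, hence $\|A\|_F\le\sqrt d\,B^2/\lambda$. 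So it suffices to cover parameters lying in the product of a Euclidean ball of radius $R$ in $\R^d$ and a Frobenius ball of radius $\sqrt d\,B^2/\lambda$ in the $d^2$-dimensional space of $d\times d$ matrices.

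Second, I would establish the Lipschitz estimate
$$\sup_{x\in\Ssc}\big|V_h(x;\theta_1,A_1)-V_h(x;\theta_2,A_2)\big|\le\|\theta_1-\theta_2\|+\sqrt{\|A_1-A_2\|_F}.$$
This follows by peeling off the outer operations $\max_{a\in\Asc}\{\cdot\}$, $u\mapsto u^+$, and $u\mapsto\min\{u,H-h+1\}$, each of which is $1$-Lipschitz and acts with the same action set and the same truncation level for both functions, reducing the left-hand side to $\sup_{x,a}\big|(\phi\trans\theta_1-\sqrt{\phi\trans A_1\phi})-(\phi\trans\theta_2-\sqrt{\phi\trans A_2\phi})\big|$ with $\|\phi(x,a)\|\le1$. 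Then $|\phi\trans(\theta_1-\theta_2)|\le\|\theta_1-\theta_2\|$ by Cauchy--Schwarz, and $|\sqrt{\phi\trans A_1\phi}-\sqrt{\phi\trans A_2\phi}|\le\sqrt{|\phi\trans(A_1-A_2)\phi|}\le\sqrt{\|A_1-A_2\|_{\rm op}}\le\sqrt{\|A_1-A_2\|_F}$, using $|\sqrt a-\sqrt b|\le\sqrt{|a-b|}$ for $a,b\ge0$ and $|\phi\trans M\phi|\le\|M\|_{\rm op}$ for $\|\phi\|\le1$.

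Third, I would take an $(\epsilon/2)$-net $\Csc_\theta$ of the $R$-ball in $\R^d$, of cardinality at most $(1+4R/\epsilon)^d$ by the standard volumetric bound, and an $(\epsilon^2/4)$-net $\Csc_A$ of the Frobenius $\sqrt d B^2/\lambda$-ball in the $d^2$-dimensional matrix space, of cardinality at most $\bigl(1+8\sqrt d\,B^2/(\epsilon^2\lambda)\bigr)^{d^2}$. For any $V=V_h(\cdot;\theta,\alpha,\Sigma)\in\mathcal{V}_h(R,B,\lambda)$, pick the nearest $\theta^\dagger\in\Csc_\theta$ to $\theta$ and the nearest $A^\dagger\in\Csc_A$ to $A=\alpha^2\Sigma^{-1}$; the Lipschitz bound yields $\sup_x|V(x)-V_h(x;\theta^\dagger,A^\dagger)|\le\epsilon/2+\sqrt{\epsilon^2/4}=\epsilon$. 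Hence $\{V_h(\cdot;\theta^\dagger,A^\dagger):\theta^\dagger\in\Csc_\theta,\,A^\dagger\in\Csc_A\}$ is an $\epsilon$-cover of $\mathcal{V}_h(R,B,\lambda)$ in the supremum norm (note that these grid functions need not themselves belong to $\mathcal{V}_h(R,B,\lambda)$, which is harmless since $\mathcal{N}_h(\epsilon)$ is only required to approximate every element of the class), so $\log|\mathcal{N}_h(\epsilon;R,B,\lambda)|\le\log|\Csc_\theta|+\log|\Csc_A|\le d\log(1+4R/\epsilon)+d^2\log\bigl(1+8d^{1/2}B^2/(\epsilon^2\lambda)\bigr)$, as claimed.

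The main obstacle is the second step: the bonus $\sqrt{\phi\trans\Sigma^{-1}\phi}$ is not conveniently Lipschitz in $\Sigma$ (inversion is badly behaved without a uniform eigenvalue floor, and the multiplicative scaling by $\alpha$ is clumsy), so a direct net over $\Sigma$ is the wrong move. Absorbing $\alpha$ into $A=\alpha^2\Sigma^{-1}$ and then using the two elementary inequalities $|\sqrt a-\sqrt b|\le\sqrt{|a-b|}$ and $|\phi\trans M\phi|\le\|M\|_F$ is precisely what turns the problem into covering a bounded Frobenius ball; the price is that one must net $A$ to resolution $\epsilon^2$ rather than $\epsilon$, which is exactly the source of the $B^2/\epsilon^2$ factor appearing in the stated bound.
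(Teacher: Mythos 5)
The paper does not prove this lemma itself---it simply cites Lemma D.6 of \citet{jin2020provably}---and your argument is precisely that standard proof: reparametrize $A=\alpha^2\Sigma^{-1}$, peel off the $1$-Lipschitz $\max/\min/(\cdot)^{+}$ operations to get the bound $\|\theta_1-\theta_2\|+\sqrt{\|A_1-A_2\|_F}$, and combine an $(\epsilon/2)$-net of the $\theta$-ball with an $(\epsilon^2/4)$-Frobenius net of the $A$-ball. So your proposal is correct and takes essentially the same route as the source on which the paper relies.
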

\begin{proof}
See Lemma D.6 in \cite{jin2020provably} for a detailed proof.
\end{proof}

\begin{lemma}[Concentration] \label{lem:concen}
When $n_k$ is sufficiently large, for any $h \in [H]$, it holds for any $h \in [H]$ and $k \in [K]$ that 
\begin{align*}
   \big\| n_k^{-1} \Lambda_h^k - \Csc_h^k \big\| \le C \sqrt{ \log(dHK/\xi) / n_k}
\end{align*}
with probability at least $ 1- \xi$, where $C > 0$ is an absolute constant and the expectation $\E_{\Dsc_k}$ is with respect to the data collecting process at the $k$th site.
\end{lemma}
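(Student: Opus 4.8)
The statement is a matrix concentration bound, so the plan is to recognize $\Lambda_h^k$ as a sum of i.i.d.\ rank-one matrices and apply a matrix tail inequality, followed by a union bound over $h$ and $k$. Concretely, $\Lambda_h^k = \sum_{\tau=1}^{n_k}\phi(x_h^{k,\tau},a_h^{k,\tau})\phi(x_h^{k,\tau},a_h^{k,\tau})\trans$, where the $n_k$ trajectories collected at site $k$ are i.i.d.; hence, for a \emph{fixed} stage $h$, the matrices $\bZ_\tau := \phi(x_h^{k,\tau},a_h^{k,\tau})\phi(x_h^{k,\tau},a_h^{k,\tau})\trans - \Csc_h^k$, $\tau\in[n_k]$, are i.i.d., mean-zero, symmetric $d\times d$ matrices. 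Since $\|\phi(x,a)\|\le 1$ in the (misspecified) linear MDP, we have $\|\phi\phi\trans\|_{\rm op}\le 1$ and, by Jensen, $\|\Csc_h^k\|_{\rm op}\le 1$, so $\|\bZ_\tau\|_{\rm op}\le 2$ almost surely and $\|\E[\bZ_\tau^2]\|_{\rm op}\le 1$.

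The next step is to apply a matrix Bernstein (or, since the summands are bounded, a matrix Hoeffding) inequality to $n_k^{-1}\sum_{\tau=1}^{n_k}\bZ_\tau = n_k^{-1}\Lambda_h^k - \Csc_h^k$, which yields, for any $\delta\in(0,1)$,
$$
\big\| n_k^{-1}\Lambda_h^k - \Csc_h^k \big\|_{\rm op} \;\le\; C_0\Big( \sqrt{\tfrac{\log(2d/\delta)}{n_k}} + \tfrac{\log(2d/\delta)}{n_k}\Big)
$$
with probability at least $1-\delta$, for an absolute constant $C_0$. The hypothesis that $n_k$ is sufficiently large (so that $\log(2d/\delta)\le n_k$) is exactly what makes the additive $\log/n_k$ term dominated by the square-root term, so that the right-hand side is at most $2C_0\sqrt{\log(2d/\delta)/n_k}$.

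Finally, I would set $\delta = \xi/(HK)$ and take a union bound over all $h\in[H]$ and $k\in[K]$, which gives the claimed bound $\|n_k^{-1}\Lambda_h^k - \Csc_h^k\|\le C\sqrt{\log(dHK/\xi)/n_k}$ simultaneously for all $h,k$ with probability at least $1-\xi$, after absorbing $\log(2d)+\log(HK/\xi)\asymp\log(dHK/\xi)$ and the constants into $C$. The only steps requiring any care are the first two: verifying that, for fixed $h$, the per-site summands are genuinely i.i.d.\ so a matrix concentration inequality applies verbatim, and bookkeeping the "sufficiently large $n_k$'' threshold so the linear-in-$1/n_k$ term is subsumed; neither presents a substantial obstacle, and the argument is otherwise routine.
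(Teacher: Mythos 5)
Your proposal matches the paper's proof essentially step for step: both write $n_k^{-1}\Lambda_h^k - \Csc_h^k$ as an average of mean-zero matrices $\phi\phi\trans - \Csc_h^k$ with operator norm at most $2$, apply the matrix Bernstein inequality, and finish with a union bound over $h\in[H]$ and $k\in[K]$ to obtain the $\sqrt{\log(dHK/\xi)/n_k}$ rate. Your explicit handling of the lower-order $\log/n_k$ term via the "$n_k$ sufficiently large" hypothesis is a minor bookkeeping refinement of the same argument, not a different route.
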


\begin{proof}
We notice that
\begin{align*}
\frac{\Lambda_h^k}{n_k}
 - \Csc_h^k  = \frac{1}{n_k} \sum_{\tau = 1}^{n_k} \Bigl(\phi(x_h^{k, \tau}, a_h^{k, \tau}) \phi(x_h^{k, \tau}, a_h^{k, \tau})^\top - \E_{\Dsc_k}\bigl[ \phi(s_h, a_h) \phi(s_h, a_h)^\top \bigr] \Bigr) = \frac{\sum_{\tau =  1}^{n_k} Z^{k,\tau}_h}{n_k} \,,
\end{align*}
where we write 
\begin{align*}
    Z^{k,\tau}_h = \phi(x_h^{k, \tau}, a_h^{k, \tau}) \phi(x_h^{k, \tau}, a_h^{k, \tau})^\top - \E_{\Dsc_k} \bigl[ \phi(x_h, a_h) \phi(x_h, a_h)^\top \mid x_1 = x  \bigr].
\end{align*}
We notice that $\|Z_h^{k, \tau}\| \le 2$.
Then, applying the matrix Bernstein inequality \citep{tropp2015introduction}, we have that
\begin{align*}
    \|n_k^{-1} \sum_{\tau = 1}^{n_k} Z_h^{k, \tau}\| \le C \sqrt{\log(d/\xi) / n_k}
\end{align*}
with probability at least $1- \xi$ for an absolute constant $C > 0$. Applying the union bound, by the definition of $Z_h^{k, \tau}$, we have for any $h \in [H]$ and $k \in [K]$ that
\begin{align*}
     \Big\| n_k^{-1} \Lambda_h^k - \E_{\Dsc_k} \bigl[ \phi(x_h, a_h) \phi(x_h, a_h)^\top  \bigr] \Big\| \le C \sqrt{ \log(dHK/\xi) / n_k}
\end{align*}
with probability at least $1-\xi$. Thus, we complete the proof of Lemma \ref{lem:concen}.
\end{proof}

\section{Additional Numeric Analysis}

 We have further conducted sensitivity analysis by using a more complex model. Specifically, for the heterogeneous covariates, we use a second-order polynomial  as 
$$\phi_l(x_h,a_h)= \big( x_{lh},a_h x_{lh},a_h^2 x_{lh}, x_{lh}\circ x_{lh} ,a_h (x_{lh}\circ x_{lh}),a_h^2 (x_{lh} \circ x_{lh}) \big)\,,$$
where $\circ$ denotes the elementwise multiplication of vectors, and $x_{lh}$ consists of the five patient-level covariates: measurements of weight, temperature, systolic blood pressure, hemoglobin, and potassium level. We then get $d_1 = 30$, plus $d_0 = 6$, yielding a state-space dimension of $36$. The estimated value function is presented \ref{as}. 
We can see that including non-linear effects do not improve the value function, in part due to the bias-variance trade-off with a limited sample size. 
\begin{figure}[ht]
    \centering
    \includegraphics[width=0.7\textwidth]{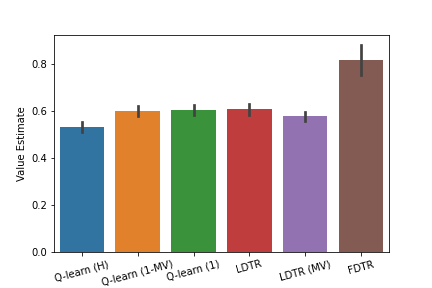}
        \caption{Estimated value function.}
          \label{as}
\end{figure}

\bibliographystyle{chicago}
\bibliography{references}       